\documentclass{article}

\usepackage[final]{neurips_2025}

\usepackage{float}
\usepackage[utf8]{inputenc} 
\usepackage[T1]{fontenc}    
\usepackage{url}            
\usepackage{booktabs}       
\usepackage{amsfonts}       
\usepackage{nicefrac}       
\usepackage{microtype}      
\usepackage{xcolor}         
\usepackage{amssymb}
\usepackage{enumitem}
\usepackage{graphicx}
\usepackage{amsmath}
\usepackage{amsthm}
\newtheorem{theorem}{Theorem}[section]

\newtheorem{lemma}[theorem]{Lemma}
\usepackage[T3,T1]{fontenc}
\usepackage{graphicx,wrapfig,lipsum}
\usepackage{booktabs}
\usepackage[table]{xcolor}
\usepackage{cancel}
\usepackage{algorithm}
\usepackage{algpseudocode}
\usepackage{booktabs}
\usepackage[table]{xcolor}
\usepackage{cancel}
\usepackage{algorithm}
\usepackage{mathtools}
\usepackage{natbib}
\PassOptionsToPackage{numbers, compress}{natbib}
\usepackage{amsmath}

\DeclareMathOperator*{\argmin}{arg\,min}
\definecolor{mine}{RGB}{0, 183, 235}
\usepackage[
    colorlinks=true,
    urlcolor=blue,
    linkcolor=black,
    citecolor=black
]{hyperref}
\usepackage{subcaption} 
\DeclareSymbolFont{tipa}{T3}{cmr}{m}{n}
\DeclareMathAccent{\invbreve}{\mathalpha}{tipa}{16}
\newcommand{\MetaKoopman}{MetaKoopman }
\newcommand{\MNIW}{MNIW}

\usepackage{algpseudocode}

\title{MetaKoopman: Bayesian Meta-Learning of Koopman Operators for Modeling Structured Dynamics under Distribution Shifts}

\author{%
Mahmoud Selim\textsuperscript{1,2} \qquad
\textbf{Sriharsha Bhat}\textsuperscript{2} \qquad
\textbf{Karl H.~Johansson}\textsuperscript{2} \\
\textsuperscript{1}TRATON, \texttt{mahmoud.selim@se.traton.com} \\
\textsuperscript{2}KTH Royal Institute of Technology, \texttt{\{mase2, svbhat, kallej\}@kth.se}
}

\begin{document}

\maketitle

\begin{abstract}

Modeling and forecasting nonlinear dynamics under distribution shifts is essential for robust decision-making in real-world systems. In this work, we propose MetaKoopman, a Bayesian meta-learning framework for modeling nonlinear dynamics through linear latent representations. \MetaKoopman learns a Matrix Normal-Inverse Wishart (\MNIW) prior over the Koopman operator, enabling closed-form Bayesian updates conditioned on recent trajectory segments. Moreover, it provides a closed-form posterior predictive distribution over future state trajectories, capturing both epistemic and aleatoric uncertainty in the learned dynamics. We evaluate MetaKoopman on a full-scale autonomous truck and trailer system across a wide range of adverse winter scenarios—including snow, ice, and mixed-friction conditions—as well as in simulated control tasks with diverse distribution shifts. MetaKoopman consistently outperforms prior approaches in multi-step prediction accuracy, uncertainty calibration and robustness to distributional shifts. Field experiments further demonstrate its effectiveness in dynamically feasible motion planning, particularly during evasive maneuvers and operation at the limits of traction. Project website: \url{https://mahmoud-selim.github.io/MetaKoopman/}
\end{abstract}

\section{Introduction} \label{sec:intro}

Modern autonomous systems must operate in complex, nonstationary environments, where the underlying system dynamics are neither fixed nor fully known. Terrain properties can shift from asphalt to ice; payload distributions may change with each mission; mechanical degradation or actuator latency can accumulate over time. Learning a single predictive model that accounts for all such variability is difficult—if not infeasible. These sources of variation induce what is known as distributional shift: a mismatch between the training conditions used to learn a model and the test-time conditions in which it is deployed. In safety-critical applications such as autonomous driving or robotic control, even small prediction errors can compound, leading to unstable or unsafe behavior. These risks are further amplified when models are unable to assess or convey uncertainty in their predictions. This raises a fundamental question: how can we design learning-based systems that remain both accurate and reliable when the dynamics themselves evolve?

From a modeling perspective, this challenge exposes a central limitation of standard approaches: the assumption of a globally valid dynamics model that remains accurate across diverse environments and latent conditions. In practice, such models are exceptionally difficult—if not impossible—to obtain. Real-world dynamics are inherently time-varying due to hardware wear, variable payloads, and environmental factors that are either unobserved or partially observable at best. Meta-learning \cite{braun2010structure} provides one path forward by training models to quickly adapt to new conditions using limited data. However, many meta-learning frameworks rely on gradient-based inner loops or iterative optimization procedures \cite{finn2017model, finn2018probabilistic, bayesianmaml}, which can introduce latency and computational overhead that make them unsuitable for real-time deployment. In contrast, Koopman operator theory \cite{koopman1931hamiltonian} represents nonlinear dynamics through linear latent-space operators, enabling highly efficient multi-step prediction and seamless integration with classical control and planning methods. Yet Koopman-based models are typically trained offline and remain static at test time \cite{yeung2019learning, shi2022deep, bruder2020data, abraham2017model, korda2018linear}, limiting their ability to respond to changing dynamics. This motivates our approach: a model that integrates the structured efficiency of Koopman operators with the adaptivity and data efficiency of meta-learning, while enabling online updates and principled uncertainty quantification.

In this work, we introduce \textbf{MetaKoopman}, a Bayesian meta-learning framework for modeling dynamics under distribution shift. At its core, MetaKoopman places a Matrix Normal–Inverse Wishart (MNIW) prior over the Koopman operator, which is meta-learned across diverse environments. In doing so, MetaKoopman learns a distribution over the underlying dynamics, capturing uncertainty in system evolution. During deployment, this prior is adapted online through closed-form Bayesian updates using recent trajectory data to obtain an updated belief (the posterior) over the dynamics. This posterior is then used to generate probabilistic forecasts (posterior predictive distribution) over future states, allowing the model to quantify both epistemic and aleatoric uncertainty in its predictions. The resulting uncertainty-aware model is integrated into a sampling-based motion planner via a variational action encoder, enabling the efficient generation of dynamically feasible trajectories. \\
MetaKoopman is evaluated across a suite of simulated benchmarks that expose models to varying terrain slopes, friction conditions, and payload dynamics. In all cases, it achieves superior multi-step prediction accuracy, calibrated uncertainty, and robustness to distributional shifts. Beyond simulation, we deploy MetaKoopman on a 37.5-ton autonomous truck–trailer system operating in harsh winter conditions, including snow, ice, and mixed-friction surfaces. In one safety-critical test, where a virtual obstacle appears on an icy road, non-adaptive models mispredict braking feasibility and fail to react safely. In contrast, MetaKoopman rapidly infers changing surface dynamics and initiates a lane change, demonstrating how structured Bayesian adaptation enables safer and more reliable control in real-world autonomous systems.

Our contributions can be summarized as follows:

\begin{enumerate}[leftmargin=0.7cm]

\item \textbf{Bayesian meta-learning of Koopman operators –} First to combine Koopman structure with Bayesian meta-learning, giving analytic test-time posterior over the Koopman operator. In addition, we derive a closed-form posterior-predictive distribution, enabling principled uncertainty quantification in forecasts.

\item \textbf{Meta-learned tempering for adaptation –} A scalar $\beta$, meta-learned during meta-training, rescales the MNIW prior precision at test time. This simple yet effective mechanism increases responsiveness to online adaptations while maintaining calibrated uncertainty.

\item \textbf{Variational action encoder –} Latent-action sampling accelerates trajectory rollouts by an order of magnitude, removing the main model-based planning bottleneck.

\item \textbf{Comprehensive evaluation –} We collect a truck dynamics dataset in adverse weather (snow, ice, and mixed friction) and use it with five simulated environments exhibiting distribution shifts to demonstrate state-of-the-art accuracy, calibrated uncertainty, and robustness. 

\item \textbf{Real-world deployment –} Live 37.5-ton truck trials show adaptive braking on ice and safe evasive snow maneuvers where static models fail.
\end{enumerate}

\section{Related Work}

\textbf{Koopman operator theory} \citep{koopman1931hamiltonian} provides a linear, though infinite-dimensional, framework for analyzing nonlinear dynamical systems by lifting them into a space of observable functions, where their evolution can be described linearly. Early methods such as dynamic mode decomposition (DMD) \citep{schmid2010dynamic, schmid2011applications} and extended dynamic mode decomposition (EDMD) \citep{williams2015data, li2017extended} approximate the operator using manually selected observables, a process that is often brittle and task-specific. To address this, recent advances leverage deep learning to automatically learn expressive embeddings \citep{lusch2018deep, otto2019linearly, yeung2019learning, takeishi2017learning}, leading to neural extensions such as deep-DMD and deep-Koopman, which significantly enhance the modeling capacity and scalability of Koopman-based methods. These developments have enabled accurate prediction and control across a range of domains, particularly where linearity allows the use of classical controllers such as LQR and MPC \citep{proctor2016dynamic, mamakoukas2021derivative, korda2018linear, abraham2017model}. Applications span soft robotics \citep{bruder2020data, shi2022deep}, aerial robotics \citep{shi2024koopman, manaa2024koopman}, robotic manipulation \citep{chen2024korol}, and vehicle dynamics \citep{cibulka2020model, wang2021deep}, demonstrating the versatility of Koopman-based modeling and control across diverse nonlinear systems.

Recent studies have further investigated adaptive and time-varying Koopman operators for modeling systems with evolving dynamics. For example, an adaptive Koopman embedding was introduced to augment a nominal Koopman model with an online neural module that updates operator matrices in real time, improving robustness to parametric variations and external disturbances in control tasks~\citep{singh2024adaptive}. Similarly, the Koopman Neural Forecaster (KNF)~\citep{wang2022koopman} employs global and local operators to capture both stable and evolving dynamics in non-stationary systems.  However, their approach is confined to time-series forecasting and does not account for control inputs. Other approaches explore explicitly time-varying Koopman operators for modeling changing systems \citep{zhang2019online, hao2022deep}, but these typically require online optimization and lack principled uncertainty estimation. In contrast, our method enables closed-form Bayesian adaptation using a meta-learned prior, providing both computational efficiency and principled uncertainty quantification.

\textbf{Meta-learning} provides a framework for learning models that rapidly adapt to new tasks by leveraging experience across related tasks \citep{finn2017model, duan2016rl, nichol2018reptile, rusu2018meta}. In dynamical systems, meta-learning has been applied to learn models that adapt online to changes in dynamics or environment \citep{nagabandi2019learning, saemundsson2018meta, clavera2018model, dorfman2021offline}. Approaches such as Bayesian MAML \citep{bayesianmaml}, PLATIPUS \citep{finn2018probabilistic}, and PEARL \citep{rakelly2019efficient} demonstrate that learning a prior over model parameters enables fast adaptation from limited recent experience, addressing the challenge of distributional shift in real-world domains. Critically, these methods avoid the need for globally accurate dynamics models by focusing on fast, local adaptation—enabling robustness to perturbations such as hardware failures, terrain variation, or sensor noise. However, many approaches either rely on optimization-based adaptation, which is computationally expensive at test time, or lack principled uncertainty quantification. In contrast, our method performs closed-form Bayesian adaptation from short trajectory segments, combining computational efficiency with tractable uncertainty estimation.

\section{Preliminaries}

\subsection{Modeling Nonlinear Dynamical Systems with the Koopman Operator}

We consider a discrete-time, nonlinear dynamical system governed by the state evolution
\begin{equation}
    x_{t+1} = f(x_t, u_t),
\end{equation}

where \( x_t \in \mathbb{R}^n \) denotes the system state, \( u_t \in \mathbb{R}^m \) denotes the control input, and \( f \) is an unknown nonlinear function. The objective is to predict future states \( x_{t+1}, x_{t+2}, \ldots, x_{t+h} \) over a horizon \( h \), given the current state and current and future control inputs.

The Koopman operator, denoted as $\bar{\mathcal{K}}: \bar{\mathcal{F}} \rightarrow \bar{\mathcal{F}}$, is an infinite-dimensional linear operator that describes the evolution of a nonlinear dynamical system. Here, $\bar{\mathcal{F}}$ refers to the set of all \textit{measurement functions} or \textit{observables}, which form an infinite-dimensional Hilbert space. 
More specifically, given an observable function $\psi$, the evolution of the system using the Koopman operator can expressed as:
\begin{equation}
    \bar{\mathcal{K}} \psi(x_t, u_t) = \psi(f(x_t, u_t), u_{t+1}) = \psi(x_{t+1}, u_{t+1})
\end{equation}

In principle, the Koopman operator provides a complete linear description of the system's dynamics in the lifted space. However, operating directly in the infinite-dimensional Hilbert space \( \bar{\mathcal{F}} \) is computationally intractable. In the following section, we introduce a finite-dimensional approximation of the Koopman operator that can be learned from data.

\subsection{Meta-Learning for Fast Adaptation Across Trajectories}

Meta-learning provides a framework for training models that can rapidly adapt to new tasks using limited data. In our setting, a task \( \mathcal{T} \) corresponds to a short trajectory segment collected from a particular operating condition—such as a specific road surface, slope, or mass distribution—where the goal is to predict future states conditioned on recent state-action history. Let \( \rho(\mathcal{T}) \) denote a distribution over such tasks, each defined by a trajectory dataset \( \mathcal{D}_{\mathcal{T}} = \{(x_t, u_t)\} \) sampled from a distinct environment.

The meta-learning objective is to find shared model parameters \( \theta \) and an adaptation procedure \( \mathcal{A} \) on an adaptation dataset $\mathcal{D}_{\mathcal{T}}^{\textit{tr}}$, such that the adapted parameters \( \theta' = \mathcal{A}(\mathcal{D}_{\mathcal{T}}^{\textit{tr}}, \theta) \) minimize a supervised loss on a held-out query set \( \mathcal{D}_{\mathcal{T}}^{\textit{test}} \), drawn from the same task:
\begin{equation}
    \min_{\theta, \mathcal{A}} \; \mathbb{E}_{\mathcal{T} \sim \rho(\mathcal{T})} \left[ \mathcal{L}_{\mathcal{T}}(\mathcal{D}_{\mathcal{T}}^{\textit{test}}, \theta') \right] \quad \textit{s.t.} \quad \theta' = \mathcal{A}(\mathcal{D}_{\mathcal{T}}^{\textit{tr}}, \theta).
\end{equation}

This formulation enables the model to generalize across environments by learning a shared initialization—or prior—that can be quickly adapted to new environments from a small number of observations. In our case, the parameters \( \theta \) define a probabilistic prior over the Koopman operator, which is updated online using recent trajectory data. This structure supports task-specific, uncertainty-aware predictions, and forms the foundation of the MetaKoopman framework described next.

\section{The MetaKoopman Framework} \label{sec:method}

We now present the MetaKoopman framework, which combines Koopman-based modeling with Bayesian meta-learning to enable adaptive and uncertainty-aware dynamics prediction. The method is built around a latent linear dynamical system, where the Koopman operator is inferred via closed-form Bayesian updates from a meta-learned prior. We first describe the model structure (Section~\ref{sec:method:latentKoopmanModel}), then present Bayesian inference and predictive forecasting (Section~\ref{sec:method:BayesianKoopman}), followed by meta-learning of the prior (Section~\ref{sec:method:metaKoopman}), and finally its integration with a real-time planner (Section~\ref{sec:method:Planner}).

\begin{figure}[t]
    \centering
    \includegraphics[width=\linewidth]{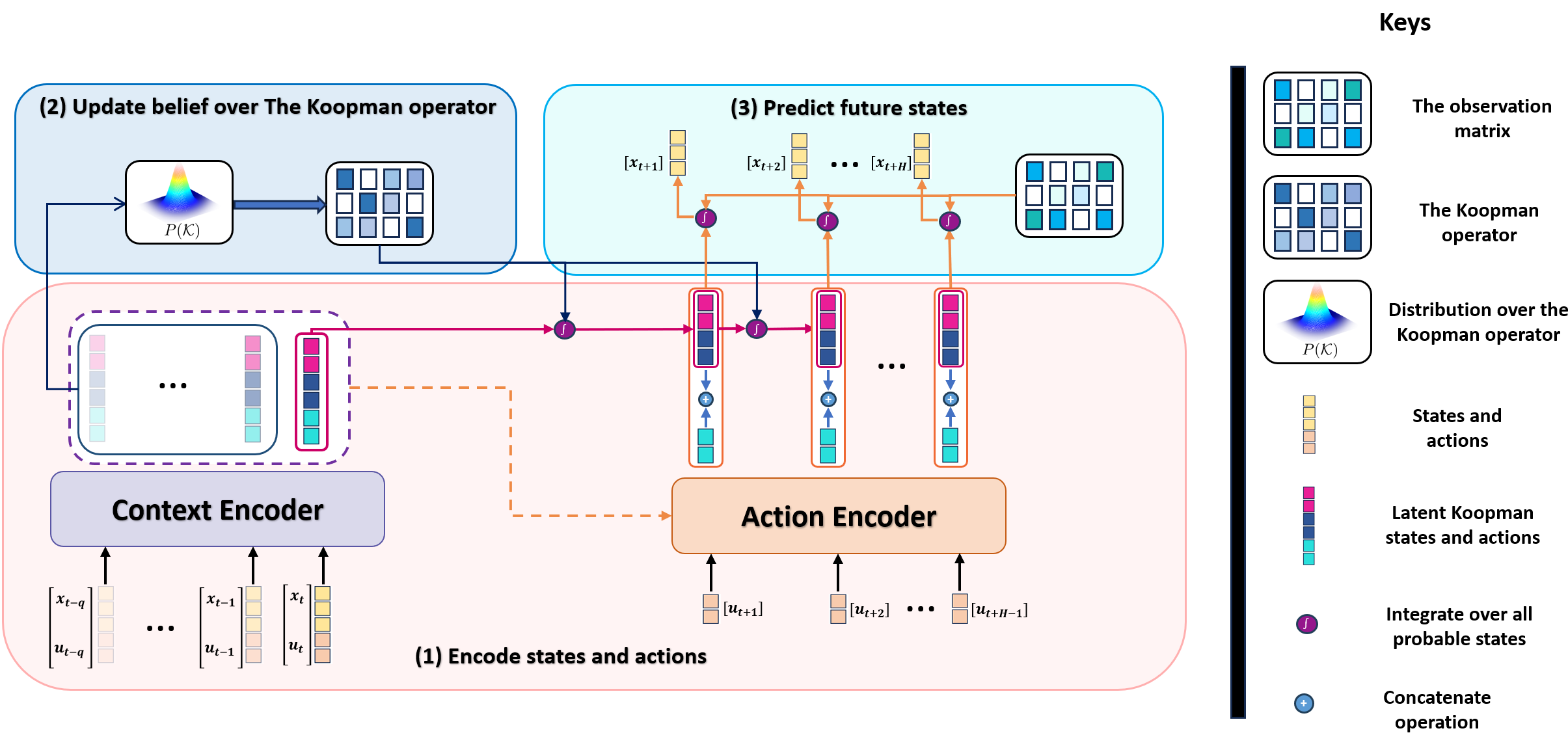}
    \caption{An Overview of the MetaKoopman framework. (1) Past states and actions are encoded into a latent representation using the context encoder. (2) This latent representation is used to update the posterior belief over the Koopman operator. (3) The updated operator is used to predict future latent states, which are decoded back to the original state space for forecasting and planning.}
    \label{fig:concept}
\end{figure}
\subsection{Latent Koopman Dynamics Model} \label{sec:method:latentKoopmanModel}

Building on the preliminaries, directly operating with the infinite-dimensional Koopman operator is infeasible in real-world systems. To make Koopman modeling tractable, we learn a finite-dimensional approximation to the embedding function \( g: \mathbb{R}^n \times \mathbb{R}^m \to \mathbb{R}^d \), where \( d \gg n + m \), which lifts state-action pairs into a latent space where the dynamics evolve approximately linearly.

We denote the resulting embedding as \( \tilde{z}_t = g(x_t, u_t) \in \mathbb{R}^d \), and decompose it into two parts: a state-related embedding \( \tilde{x}_t \in \mathbb{R}^\eta \) and a control-related embedding \( \tilde{u}_t \in \mathbb{R}^{d-\eta} \), such that \( \tilde{z}_t = (\tilde{x}_t, \tilde{u}_t) \). The evolution of the system is then modeled linearly in latent space as:
\begin{equation}
    \tilde{x}_{t+1} = \mathcal{K} \tilde{z}_t,
\end{equation}
where \( \mathcal{K} \in \mathbb{R}^{\eta \times d} \) is a finite-dimensional Koopman operator.
Unlike prior methods that embed each state-action pair independently, we learn embeddings that capture temporal dependencies across multiple steps. Specifically, we define a sequence-based embedding over delayed inputs:
\begin{equation}
    [\tilde{z}_{t-q}, \ldots, \tilde{z}_{t}] = G_\theta(x_{t-q}, u_{t-q}, \ldots, x_t, u_t),
    \label{eq:embedding}
\end{equation}

We refer to \( G_\theta \) as the \emph{context encoder}, as it maps a window of past states and actions into a compact embedding that summarizes the local dynamics. This sequence-based representation captures temporal dependencies that help the Koopman operator more accurately linearize the system evolution. These temporally enriched embeddings provide a more informative representation of local dynamics than single-step inputs, supporting more accurate modeling and inference. The use of delayed coordinates is further supported by Takens's theorem \citep{takens2006detecting}, which motivates the use of time-delay embeddings for reconstructing system dynamics. We analyze the effect of history length in the appendix, and find that moderate windows provide the best tradeoff between adaptability and predictive accuracy.

To perform multi-step prediction, we additionally encode future control inputs \( u_{t+1}, \ldots, u_{t+h} \) using a dedicated \emph{action encoder}. This ensures that the action embeddings are consistent with the latent dynamics inferred from the recent trajectory context. 
Finally, the predicted latent states are mapped back to the original state space using a learned linear observation matrix \( \mathcal{C} \in \mathbb{R}^{n \times \eta} \):
\begin{equation}
    \hat{x}_t = \mathcal{C} \tilde{x}_t.
\end{equation}
An overview of the high-level model architecture is provided in Figure~\ref{fig:concept}; additional implementation details are deferred to the Appendix.

\subsection{Closed-Form Bayesian Inference for Koopman Dynamics} \label{sec:method:BayesianKoopman}

To support data-efficient and uncertainty-aware adaptation, we formulate Koopman operator learning as a Bayesian inference problem. Rather than estimating a fixed operator, we place a distribution over plausible operators and update it using recent trajectory segments. This enables principled uncertainty quantification in both the model’s parameters and its predictions, and facilitates rapid adaptation to new environmental conditions.

\paragraph{Probabilistic Formulation.}

We extend the classical Koopman model to account for process noise, measurement error, and embedding imperfections. Specifically, we model latent dynamics as:
\begin{equation}
\tilde{x}_{t+1} = \mathcal{K} \tilde{z}_t + \epsilon, \quad \epsilon \sim \mathcal{N}(0, \Sigma),
\label{eq:kooptransitions}
\end{equation}
Where \(\mathcal{K} \in \mathbb{R}^{\eta \times d }\) is the stochastic Koopman operator, and \(\Sigma \in \mathbb{R}^{\eta \times \eta }\)  is the process noise covariance. Under the assumption of i.i.d. noise, the likelihood of latent transitions $\tilde{z}_t \rightarrow \tilde{x}_{t+1}$ is: 
\begin{equation}
p(\tilde{X} \mid \mathcal{K}, \tilde{Z}, \Sigma) = \prod_{i=1}^N \mathcal{N}(\tilde{x}_{t+1}^i \mid \mathcal{K} \tilde{z}_t^i, \Sigma),
\label{eq:likelihood}
\end{equation}
where \(\tilde{X} \in \mathbb{R}^{\eta \times N}\), \(\tilde{Z} \in \mathbb{R}^{d \times N}\), and \(N\) is the number of data points. The conjugate prior for this likelihood is the matrix normal inverse Wishart distribution \(\mathcal{MNIW}\):
\begin{equation}
\mathcal{K}, \Sigma \sim \mathcal{MNIW}(\Breve{M}, \Breve{V}, \Breve{\nu}, \Breve{\Psi}), \label{eq:mniw}
\end{equation}
where \( \mathcal{K} | \Sigma \sim \mathcal{MN}(\Breve{M}, \Sigma, \Breve{V}) \), and \( \Sigma \sim \mathcal{IW}(\Breve{\nu}, \Breve{\Psi}) \).

\begin{lemma} \label{lemma:posterior}
Given the likelihood (Eq.\ref{eq:likelihood}), the posterior over  \( \mathcal{K}, \Sigma \)  remains in the $\mathcal{MNIW}$ family:
\begin{equation}
\mathcal{K}, \Sigma \sim \mathcal{MNIW}(\invbreve{M}, \invbreve{V}, \invbreve{\nu}, \invbreve{\Psi}), \label{eq:posterior}
\end{equation}
%
with the posterior parameters given by:
\begin{align}
    \invbreve{M} = S_{xz}S^{-1}_{zz}, \qquad\invbreve{V} = S_{zz}, \qquad\invbreve{\nu}=N + \Breve{{\nu}}, \qquad\invbreve{\Psi} = \Breve{\Psi} + S_{x|z}
\end{align}
and
\begin{align}
    S_{xz} = \tilde{X} \tilde{Z}^\top + \Breve{M}\Breve{V}, \quad S_{zz} &= \tilde{Z} \tilde{Z}^\top + \Breve{V}, \quad S_{xx} = \tilde{X} \tilde{X}^\top + \Breve{M}\Breve{V}\Breve{M^\top}, \nonumber \\ S_{x \mid z} &= S_{xx} - S_{xz} S_{zz}^{-1} S_{xz}^\top
\end{align}

\end{lemma}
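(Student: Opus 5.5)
The plan is to write the joint density of data and parameters explicitly, collect every term that depends on $\mathcal{K}$ into one quadratic form, and complete the square in $\mathcal{K}$. Read off the parameters of the MNIW prior (Eq.~\ref{eq:mniw}) as follows. $\mathcal{K}\mid\Sigma \sim \mathcal{MN}(\Breve{M},\Sigma,\Breve{V})$ has density proportional to
\[
|\Sigma|^{-d/2}\exp\!\left(-\tfrac12\operatorname{tr}\!\big[\Sigma^{-1}(\mathcal{K}-\Breve{M})\Breve{V}(\mathcal{K}-\Breve{M})^\top\big]\right),
\]
so $\Breve{V}$ plays the role of a column \emph{precision}; this is the convention consistent with $S_{zz}=\tilde Z\tilde Z^\top+\Breve V$. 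The inverse Wishart factor is $\propto |\Sigma|^{-(\Breve{\nu}+\eta+1)/2}\exp(-\tfrac12\operatorname{tr}[\Sigma^{-1}\Breve{\Psi}])$. The likelihood (Eq.~\ref{eq:likelihood}) can be written compactly as
\[
|\Sigma|^{-N/2}\exp\!\left(-\tfrac12\operatorname{tr}\!\big[\Sigma^{-1}(\tilde X-\mathcal{K}\tilde Z)(\tilde X-\mathcal{K}\tilde Z)^\top\big]\right).
\]

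Multiply the three factors and expand both quadratic forms inside a single trace against $\Sigma^{-1}$. The terms quadratic in $\mathcal{K}$ sum to $\mathcal{K}(\tilde Z\tilde Z^\top+\Breve V)\mathcal{K}^\top=\mathcal{K}S_{zz}\mathcal{K}^\top$. The cross terms give $-\mathcal{K}S_{xz}^\top - S_{xz}\mathcal{K}^\top$, with $S_{xz}=\tilde X\tilde Z^\top+\Breve M\Breve V$, using symmetry of $\Breve V$. The $\mathcal{K}$-free terms sum to $S_{xx}+\Breve\Psi$.

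Completing the square gives
\[
\mathcal{K}S_{zz}\mathcal{K}^\top-\mathcal{K}S_{xz}^\top-S_{xz}\mathcal{K}^\top+S_{xx}=(\mathcal{K}-S_{xz}S_{zz}^{-1})S_{zz}(\mathcal{K}-S_{xz}S_{zz}^{-1})^\top+S_{xx}-S_{xz}S_{zz}^{-1}S_{xz}^\top.
\]
This requires $S_{zz}$ to be invertible. That holds because $\Breve V\succ 0$ and $\tilde Z\tilde Z^\top\succeq 0$, so $S_{zz}$ is symmetric positive definite.

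Finally, factor the joint density in the MNIW shape. The factor $|\Sigma|^{-d/2}\exp(-\tfrac12\operatorname{tr}[\Sigma^{-1}(\mathcal{K}-\invbreve M)S_{zz}(\mathcal{K}-\invbreve M)^\top])$ is an unnormalised $\mathcal{MN}(\invbreve M,\Sigma,\invbreve V)$ with $\invbreve V=S_{zz}$. Its normalising constant depends on $\Sigma$ only through $|\Sigma|^{d/2}$, which is already present. What remains is $|\Sigma|^{-(N+\Breve\nu+\eta+1)/2}\exp(-\tfrac12\operatorname{tr}[\Sigma^{-1}(\Breve\Psi+S_{x|z})])$, which is an unnormalised $\mathcal{IW}(N+\Breve\nu,\Breve\Psi+S_{x|z})$. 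By Bayes' rule the posterior is therefore MNIW with the stated parameters.

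The main obstacle is bookkeeping rather than mathematics. First, the MN convention must be fixed so that $\Breve V$ is a precision, and the $|\Sigma|$ exponents must be tracked so that the $d/2$ from the prior is absorbed by the posterior MN normaliser while the $N/2$ shifts $\nu$. Second, the new scale matrix $\Breve\Psi+S_{x|z}$ must be positive definite. $S_{x|z}$ is the Schur complement of $S_{zz}$ in the block matrix
\[
\begin{pmatrix}\tilde X\\ \tilde Z\end{pmatrix}\begin{pmatrix}\tilde X\\ \tilde Z\end{pmatrix}^{\!\top}+\begin{pmatrix}\Breve M\\ I\end{pmatrix}\Breve V\begin{pmatrix}\Breve M\\ I\end{pmatrix}^{\!\top}\succeq 0.
\]
Hence $S_{x|z}\succeq 0$, and adding $\Breve\Psi\succ 0$ gives a positive definite scale matrix, so the result is a valid inverse Wishart.
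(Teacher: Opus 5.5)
Your proposal is correct. It is the standard conjugacy argument behind the result the paper merely cites from Murphy: complete the square in $\mathcal{K}$ against $S_{zz}$, let the posterior matrix-normal normaliser absorb $|\Sigma|^{-d/2}$, and read off the inverse-Wishart remainder with $N$ added to the degrees of freedom. Your Schur-complement check that $\Breve\Psi+S_{x\mid z}\succ 0$ is a worthwhile addition the paper omits.

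Your choice of $\Breve V$ as a column \emph{precision} is exactly what the stated update $\invbreve V=\tilde Z\tilde Z^\top+\Breve V$ requires. Be aware, though, that the paper's appendix writes the matrix-normal density with $\invbreve V^{-1}$ in the exponent, which is a covariance convention. Under the lemma's convention, the factor $1+\tilde z_t^\top\invbreve V\tilde z_t$ in the predictive variance should therefore involve $\invbreve V^{-1}$.
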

\begin{proof}
The proof follows directly from \cite{murphy2023probabilistic}.
\end{proof}

\begin{lemma}\label{lemma:posterior_predictive}
Given the posterior in Eq.~\eqref{eq:posterior}, the one-step posterior predictive distribution can be well-approximated by a multivariate Gaussian:

\begin{align}
    \tilde{x}_{t+1} | \tilde{z}_{t}, \tilde{X}, \tilde{Z} \sim \mathcal{N} \left(\invbreve{M} \tilde{z}_t,  \invbreve{\Psi} \left(1 + \tilde{z}_{t}^\top \invbreve{V} \tilde{z}_{t}\right) \right)    
\end{align}
and the multi-step predictions for future latent states can be approximated by a moment-matched Gaussian, obtained by recursively propagating the first two moments and re-approximating by a Gaussian at each step:
\begin{subequations}
\label{eq:posterior_predictive}
\begin{align} 
\mu_{t+1} \;&=\; \hat{M}\,\mu_{z,t}, 
\\
\Sigma_{t+1} \;=\; \hat{M}\,\Sigma_{z,t}\,\hat{M}^{\!\top}
\;&+\; \hat{\Psi}\,\Big(1+\mu_{z,t}^\top \hat{V}\,\mu_{z,t} + \mathrm{tr}(\hat{V}\,\Sigma_{z,t})\Big).
\end{align}
\end{subequations}
where $(\mu_{t},\Sigma_{t})$ denote the latent state, $\tilde{x}_t$, mean and covariance and $(\mu_{z,t},\Sigma_{z,t})$ denote the mean and covariance of the regressor $\tilde{z}_t$.
\end{lemma}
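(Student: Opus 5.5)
The plan has two parts: an exact one-step computation under the posterior of Lemma~\ref{lemma:posterior}, followed by a moment-propagation argument for the multi-step recursion.

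\textbf{One-step predictive.} Fix a regressor $\tilde{z}_t$ and condition on $\Sigma$. Under the posterior, $\mathcal{K}\mid\Sigma \sim \mathcal{MN}(\invbreve{M},\Sigma,\invbreve{V})$, with $\invbreve{V}$ understood as the column covariance in the convention where the predictive uses $\invbreve{V}$; with the precision parametrisation of Lemma~\ref{lemma:posterior}, this means $S_{zz}^{-1}$. Then
\[
\mathcal{K}\tilde{z}_t \mid \Sigma \sim \mathcal{N}\big(\invbreve{M}\tilde{z}_t,\ \Sigma\,(\tilde{z}_t^\top \invbreve{V}\tilde{z}_t)\big),
\]
since for a matrix normal we have $\mathrm{vec}(\mathcal{K}) \sim \mathcal{N}(\cdot,\ \invbreve{V}\otimes\Sigma)$, and $\mathcal{K}\tilde{z}_t = (\tilde{z}_t^\top\otimes I)\,\mathrm{vec}(\mathcal{K})$. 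Adding the independent noise $\epsilon\sim\mathcal{N}(0,\Sigma)$ gives
\[
\tilde{x}_{t+1}\mid\Sigma \sim \mathcal{N}\big(\invbreve{M}\tilde{z}_t,\ \Sigma(1+\tilde{z}_t^\top\invbreve{V}\tilde{z}_t)\big).
\]
Integrating $\Sigma\sim\mathcal{IW}(\invbreve{\nu},\invbreve{\Psi})$ against a Gaussian whose covariance is a scalar multiple of $\Sigma$ yields a multivariate Student-$t$. Its mean is $\invbreve{M}\tilde{z}_t$, its scale matrix is proportional to $\invbreve{\Psi}(1+\tilde{z}_t^\top\invbreve{V}\tilde{z}_t)$, and it has $\invbreve{\nu}-\eta+1$ degrees of freedom. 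I will cite the standard Bayesian linear regression result (e.g.\ \cite{murphy2023probabilistic}) rather than redo the integral. The Gaussian approximation stated in the lemma is then justified by noting that $\invbreve{\nu}=N+\Breve{\nu}$ grows with the context length, so the Student-$t$ converges to a Gaussian. The normalising constant relating the scale to the covariance, $(\invbreve{\nu}-\eta-1)^{-1}$ or $(\invbreve{\nu}-\eta+1)^{-1}$, can be absorbed into $\invbreve{\Psi}$. I would state explicitly that this absorption is the precise sense of ``well-approximated.''

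\textbf{Multi-step recursion.} Now let $\tilde{z}_t$ be random, with mean $\mu_{z,t}$ and covariance $\Sigma_{z,t}$, and independent of the posterior draw of $(\mathcal{K},\Sigma)$. This independence is an approximation: in a true rollout the same $\mathcal{K}$ is reused at every step, and we ignore that correlation. I will compute the first two moments by the laws of total expectation and total covariance, using the conditional Gaussian from the first part. The mean is
\[
\mathbb{E}[\tilde{x}_{t+1}] = \mathbb{E}[\invbreve{M}\tilde{z}_t] = \invbreve{M}\mu_{z,t}.
\]
The total covariance splits into two terms:
\[
\mathrm{Cov}(\invbreve{M}\tilde{z}_t) + \mathbb{E}\big[\invbreve{\Psi}(1+\tilde{z}_t^\top\invbreve{V}\tilde{z}_t)\big].
\]
The first term is $\invbreve{M}\Sigma_{z,t}\invbreve{M}^\top$. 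For the second, I apply the quadratic-form identity
\[
\mathbb{E}[\tilde{z}^\top V\tilde{z}] = \mu_z^\top V\mu_z + \mathrm{tr}(V\Sigma_z),
\]
which produces exactly the stated $\Sigma_{t+1}$, with the hatted quantities being the posterior parameters. Re-approximating by a Gaussian with these moments is moment matching, i.e.\ the KL projection onto Gaussians, which closes the recursion.

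To close the loop, I would note that $\mu_{z,t+1}$ and $\Sigma_{z,t+1}$ are obtained by stacking $(\mu_{t+1},\Sigma_{t+1})$ with the deterministic action embedding $\tilde{u}_{t+1}$. That embedding contributes zero covariance, and there are no cross terms provided the action encoder is deterministic given the controls.

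\textbf{Main obstacle.} The difficult step is being honest about the approximations rather than the algebra. There are three: the Student-$t$ to Gaussian step, the convention mismatch between $\invbreve{V}=S_{zz}$ and its inverse, and the neglected correlation across steps from reusing $\mathcal{K}$. My first move would be to state these as explicit assumptions, namely large $\invbreve{\nu}$, $\invbreve{V}$ read as the column covariance, and independent per-step operator draws, and then prove exact moment identities under those assumptions.
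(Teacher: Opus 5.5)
Your argument is correct. For the one-step predictive it takes a different route from the paper's appendix; the multi-step part matches the paper.

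\textbf{How the two one-step derivations differ.} The paper integrates out $\mathcal{K}$ by completing the square in its $S_{aa},S_{ab},S_{bb}$ blocks, then integrates $\Sigma$ against the inverse-Wishart density. It then needs the Woodbury identity (to get $S_{ii}^{-1}=1+\tilde{z}_t^\top\invbreve{V}\tilde{z}_t$ and $S_{ij}S_{ii}^{-1}=\invbreve{M}\tilde{z}_t$) and the matrix determinant lemma before a Student-$t$ density is visible. You instead push the matrix normal through the linear map $\mathcal{K}\mapsto\mathcal{K}\tilde{z}_t$ using $\mathrm{vec}$ and the Kronecker identity. This gives the conditional $\mathcal{N}(\invbreve{M}\tilde{z}_t,\Sigma(1+\tilde{z}_t^\top\invbreve{V}\tilde{z}_t))$ in one line and makes the split into a noise term ($1$) and an operator term ($\tilde{z}_t^\top\invbreve{V}\tilde{z}_t$) transparent. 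The remaining $\Sigma$-integral is the standard normal--inverse-Wishart scale mixture, which you cite.

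\textbf{What each route buys.} The paper's route is self-contained and writes out the density. However, its bookkeeping slips in two places:
\begin{itemize}
\item Once $\mathcal{K}$ is integrated out, only a factor $|\Sigma|^{-1/2}$ multiplies the inverse-Wishart density. The exponent after the $\Sigma$-integral should therefore be $-(\invbreve{\nu}+1)/2$, not $-(\invbreve{\nu}+\eta)/2$; the wrong exponent is why the appendix reports $\invbreve{\nu}$ degrees of freedom.
\item It omits the division by the degrees of freedom in the scale, and then claims the Gaussian surrogate preserves the Student-$t$ moments exactly.
\end{itemize}
Your values are the correct ones: $\invbreve{\nu}-\eta+1$ degrees of freedom and covariance $\invbreve{\Psi}(1+\tilde{z}_t^\top\invbreve{V}\tilde{z}_t)/(\invbreve{\nu}-\eta-1)$. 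Your reading of $\invbreve{V}$ as the column covariance ($S_{zz}^{-1}$) also matches the matrix-normal density the appendix actually uses.

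\textbf{One point to sharpen.} That normalising factor does not vanish as $\invbreve{\nu}$ grows. Because $\invbreve{\Psi}=\Breve{\Psi}+S_{x\mid z}$ grows roughly linearly in $N$, the literal covariance in the statement grows with context length, while the true predictive covariance does not. So present the absorption as an explicit redefinition of $\invbreve{\Psi}$, used consistently in the recursion. Keep it separate from the large-$\invbreve{\nu}$ Student-$t$-to-Gaussian step, rather than treating it as part of ``well-approximated.''

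\textbf{Multi-step part.} This coincides with the paper's argument: tower property, law of total covariance, and the quadratic-form identity. Your explicit assumptions spell out what the paper leaves implicit: that $\tilde{z}_t$ is independent of the posterior draw, and that the action embedding is deterministic.
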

\begin{proof}
See the appendix for the derivation and discussion of the Gaussian approximation.
\end{proof}

\subsection{MetaKoopman: Bayesian Meta-Learning of the Koopman Operator}
\label{sec:method:metaKoopman}

The Bayesian formulation in Section~\ref{sec:method:BayesianKoopman} enables posterior inference over the Koopman operator. However, this relies on the quality of the prior. Rather than specifying it manually, MetaKoopman \emph{meta-learns} the prior parameters across a distribution of tasks. This yields a prior that supports fast, uncertainty-aware adaptation via closed-form Bayesian updates. This section describes the meta-learning procedure (Section~\ref{sec:method:metaKoopman:prior}) and the test-time adaptation strategy with tempering (Section~\ref{sec:method:metaKoopman:tempering}).

\subsubsection{Meta-Learning the Prior}
\label{sec:method:metaKoopman:prior}

We parameterize the prior over the Koopman operator and noise covariance as a $\mathcal{MNIW}$ distribution with parameters \( \theta = \{\Breve{M}, \Breve{V}, \Breve{\nu}, \Breve{\Psi}\} \). The goal is to meta-learn them across a distribution of tasks \( \mathcal{T} \sim \rho(\mathcal{T}) \), where each task corresponds to a trajectory segment from a distinct operating condition. 

For each task \( \mathcal{T} \), we split its trajectory data into an adaptation set \( \mathcal{D}_{\mathcal{T}}^{\text{tr}} \) and a held-out evaluation set \( \mathcal{D}_{\mathcal{T}}^{\text{test}} \). Using Eq.~\ref{eq:posterior}, we compute posterior parameters from \( \mathcal{D}_{\mathcal{T}}^{\text{tr}} \), then evaluate the posterior predictive (Eq.~\ref{eq:posterior_predictive}) on \( \mathcal{D}_{\mathcal{T}}^{\text{test}} \). The meta-objective is minimize the negative log likelihood of the ground truth under the posterior predictive distribution:
\begin{equation} \label{eq:metaObjective}
    \min_{\theta} \; \mathbb{E}_{\mathcal{T} \sim \rho(\mathcal{T})} \left[ \mathcal{L}_{\mathcal{T}}^{\text{test}}\left( \theta'(\mathcal{D}_{\mathcal{T}}^{\text{tr}}) \right) \right] = \min_{\theta} \; \mathbb{E}_{\mathcal{T} \sim \rho(\mathcal{T})} \left[ 
    - \log p\left( \mathcal{D}_{\mathcal{T}}^{\text{test}} \mid \theta'(\mathcal{D}_{\mathcal{T}}^{\text{tr}}) \right)
    \right],
\end{equation}
where \( \theta'(\mathcal{D}_{\mathcal{T}}^{\text{tr}}) \) denotes the posterior parameters obtained by updating the prior \( \theta \) with \( \mathcal{D}_{\mathcal{T}}^{\text{tr}} \).
This formulation encourages learning a prior that is both expressive and adaptable: it captures structure shared across tasks, while allowing efficient posterior inference from limited data. Unlike gradient-based meta-learning methods, adaptation here is fully analytical, requiring no inner-loop optimization. 

\subsubsection{Online Adaptation with Tempering}
\label{sec:method:metaKoopman:tempering}

To modulate the strength of prior information during adaptation, we introduce a scalar tempering factor \( \beta \in (0, 1] \) that scales the prior precision. Specifically, we temper the prior as:

\begin{align} \label{eqn:temper_prior}
\Breve{V}' = \beta^{-1} \Breve{V}, \quad \Breve{\Psi}' = \beta^{-1} \Breve{\Psi}.
\end{align}

Smaller values of \( \beta \) reduce the influence of the prior, allowing the model to adapt more aggressively to new data. Importantly, we \emph{meta-learn} \( \beta \) jointly with the MNIW prior parameters, treating it as a learnable scalar optimized via the meta-objective in Eq.~\ref{eq:metaObjective}. This allows MetaKoopman to discover an optimal tradeoff between prior knowledge and new evidence, tailored to the task distribution. Algorithm \ref{alg:meta_training} details the procedure for meta-learning the Koopman prior.

\begin{algorithm}[t]
\caption{Meta-Learning of the Koopman Prior}
\label{alg:meta_training}
\begin{algorithmic}[1]
\Require Distribution over tasks \( \rho(\mathcal{T}) \); initial prior parameters \( \theta = \{\Breve{M}, \Breve{V}, \Breve{\nu}, \Breve{\Psi}, \beta \} \)
\While{not converged}
    \State Sample batch \( \{ \mathcal{T}_i \}_{i=1}^B \sim \rho(\mathcal{T}) \)
    \For{each task \( \mathcal{T}_i \)}
        \State Split data into \( \mathcal{D}_{\mathcal{T}_i}^{\text{tr}}, \mathcal{D}_{\mathcal{T}_i}^{\text{test}} \).
        \State Temper the prior: \( \Breve{V}' = \beta^{-1} \Breve{V}, \quad \Breve{\Psi}' = \beta^{-1} \Breve{\Psi} \) using Eq.~\eqref{eqn:temper_prior}.
        \State Compute posterior parameters using Eq.~\eqref{eq:posterior} on \( \mathcal{D}_{\mathcal{T}_i}^{\text{tr}} \).
        \State Evaluate log-likelihood on \( \mathcal{D}_{\mathcal{T}_i}^{\text{test}} \) using Eq.~\eqref{eq:posterior_predictive}.
    \EndFor
    \State Update \( \theta \) using gradients of total NLL across tasks (Eq.~\eqref{eq:metaObjective}).
\EndWhile
\end{algorithmic}
\end{algorithm}
\subsection{Uncertainty-Aware Planning with Variational Action Encoding}
\label{sec:method:Planner}

To integrate MetaKoopman into a sampling-based motion planner, we must generate a large number of dynamically feasible trajectories in real time. This typically requires sampling actions and encoding them into latent space using the \emph{action encoder}, but doing so for each sample is computationally expensive, which can bottleneck the planning process.

To mitigate this, we introduce a \emph{variational action encoder} that enables efficient sampling of future action embeddings directly from a standard Gaussian distribution. Instead of deterministically mapping each future action to a latent embedding, the encoder models a distribution \( q_{\phi}(\tilde{u}_i \mid u_i) \) over latent actions \( \tilde{u}_i \), parameterized by a mean and diagonal covariance matrix, where \( \phi \) denotes the parameters of the variational action encoder. Following the standard VAE \citep{kingma2013auto}, We regularize this distribution during training to match a standard normal prior \( p(\tilde{u}_i) = \mathcal{N}(0, I) \) by minimizing the KL divergence:
\begin{equation}
\label{eq:vae-action-loss}
\mathcal{L}_{\text{total}}
\;=\;
\mathcal{L}_{\text{NLL}}
\;+\;
\sum_{i=t}^{t+H-1}
D_{\mathrm{KL}}\!\left(q_{\phi}(\tilde u_i \mid u_i)\,\big\|\,\mathcal N(0,I)\right).
\end{equation}

This encourages the latent future action space to follow a standard Gaussian, enabling direct sampling \( \tilde{u}_i \sim \mathcal{N}(0, I) \) at test time without evaluating the \emph{action encoder}. This allows the planner to efficiently generate thousands of trajectories by simulating them only through the linear Koopman dynamics.

We refer to this mechanism as \emph{Cost-Guided Latent Action Sampling (CLAS)}, which enables cost-based scoring of sampled trajectories while leveraging uncertainty-aware predictions. Implementation details and ablations are provided in the appendix.

\section{Experimental Results}\label{sec:experiments}
\subsection{Experimental Setup}
\label{sec:experiments:setup}
Our experiments aim to evaluate the capabilities of MetaKoopman in both simulated and real-world control settings. We investigate the following key questions:  
(i) How accurately does the model predict future dynamics under distribution shift?  
(ii) Does adaptation from short trajectory segments meaningfully alter the model's predictions?  
(iii) How does MetaKoopman compare to existing adaptive and probabilistic baselines?  
(iv) Can the model support safer closed-loop control in real-world deployment scenarios?

To explore these questions, we evaluate MetaKoopman across a suite of simulation benchmarks based on MuJoCo \cite{todorov2012mujoco}, as well as a full-scale real-world autonomous truck. Additional details on the truck dataset collection, along with ablations and detailed comparisons—including the effect of tempering, uncertainty calibration, computational complexity, the variational action encoder, and history length sensitivity—are provided in the appendix.

\textbf{Evaluation environments.} We evaluate MetaKoopman on a diverse set of simulation and real-world environments, each designed to test generalization under distribution shift or adaptation to structural perturbations:
\begin{itemize}[leftmargin=0.7cm]

\item \textbf{Real-World Autonomous Truck in adverse winter conditions:} A 37.5-ton autonomous truck is evaluated under winter conditions including snow, ice, and mixed-friction roads, with shifts in tire-surface interaction and braking dynamics.

\item \textbf{HalfCheetah-Slope:} The agent is trained on mild inclines ($[-10^\circ, 10^\circ]$) and tested on steeper, unseen slopes ($[-15^\circ, 15^\circ]$), evaluating generalization across varying terrain.

\item \textbf{Hopper-Gravity:} Gravity is scaled during training ($[0.9, 1.1]$) and further perturbed at test time ($[0.85, 1.15]$), simulating changes in payload or environmental conditions.

\item \textbf{Walker-Friction:} Surface friction is varied from $\mu/3$ to $3\mu$ during training, and extended to $[\mu/5, 5\mu]$ at test time, exposing models to extreme differences in contact dynamics (e.g., dry vs. slippery terrain).

 \item \textbf{Ant-DisabledJoints:} The agent is trained with one of the first six joints disabled and tested with arbitrary joint failures among all eight, modeling structural perturbations such as hardware damage or actuator failure.

\item \textbf{Panda-Damping:} A 7-DoF \textit{Panda Lift} manipulation task from \textit{robosuite}, where joint damping coefficients are varied during training and extended beyond the seen range at test time, evaluating adaptation to shifts in contact and actuation dynamics typical of manipulation domains.

\item \textbf{HalfCheetah-Stationary:} A standard unperturbed HalfCheetah environment is included to evaluate the robustness and the behavior in the absence of distribution shift, ensuring adaptive methods do not degrade under stationary conditions.

\end{itemize}

\textbf{Baselines.} We evaluate MetaKoopman by comparing it against a a range of strong and widely used baselines that span deterministic, probabilistic, and meta-learned dynamics models:
(1) Deep Koopman Operator (DKO) \citep{shi2022deep}, employs neural networks to jointly learn observable functions and the Koopman operator, embedding a single state-action pair rather than entire trajectories. However,  it lacks adaptation and uncertainty estimation.  
(2) Gradient-Based Adaptive Learner (GrBAL) \citep{nagabandi2019learning} is a model-based meta-learner that performs online adaptation via gradient updates, but offers no calibrated uncertainty. 
(3) Bayesian Model-Agnostic Meta-Learner (Bayesian MAML) \citep{bayesianmaml} meta-learns a prior over dynamics models, combining adaptation with uncertainty estimation, but relies on optimization-based updates at test time. 
(4) Bayesian Neural Networks (BNNs) model predictive uncertainty through approximate posterior sampling, but do not adapt to changing dynamics. 
(5) Ensemble models (EMLP) \citep{lakshminarayanan2017simple} train an ensemble of five neural networks, estimating uncertainty through inter-model variance. They provide robustness but lack explicit mechanisms for adaptation. 
(6) Finally, Neural Ordinary Differential Equations (Neural ODEs) \citep{chen2018neural} model continuous-time dynamics but lack both uncertainty estimation and test-time adaptability. These baselines contextualize MetaKoopman’s contributions in terms of structure, adaptability, and principled uncertainty.

\begin{figure}[t]
\centering
\begin{subfigure}{0.32\textwidth}
    \includegraphics[width=\linewidth]{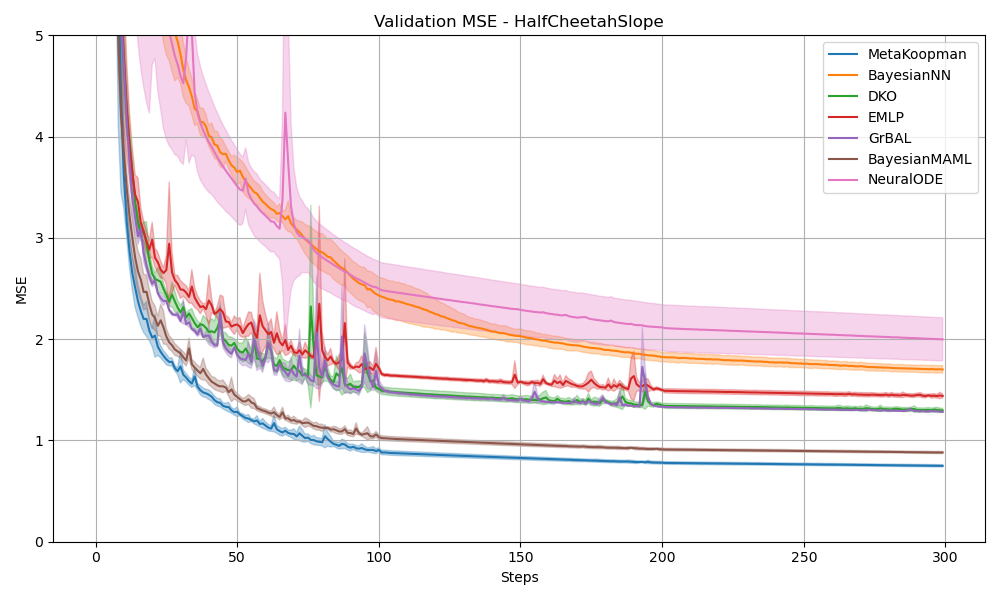}
    \caption{HalfCheetah-Slope}
\end{subfigure}
\hfill
\begin{subfigure}{0.32\textwidth}
    \includegraphics[width=\linewidth]{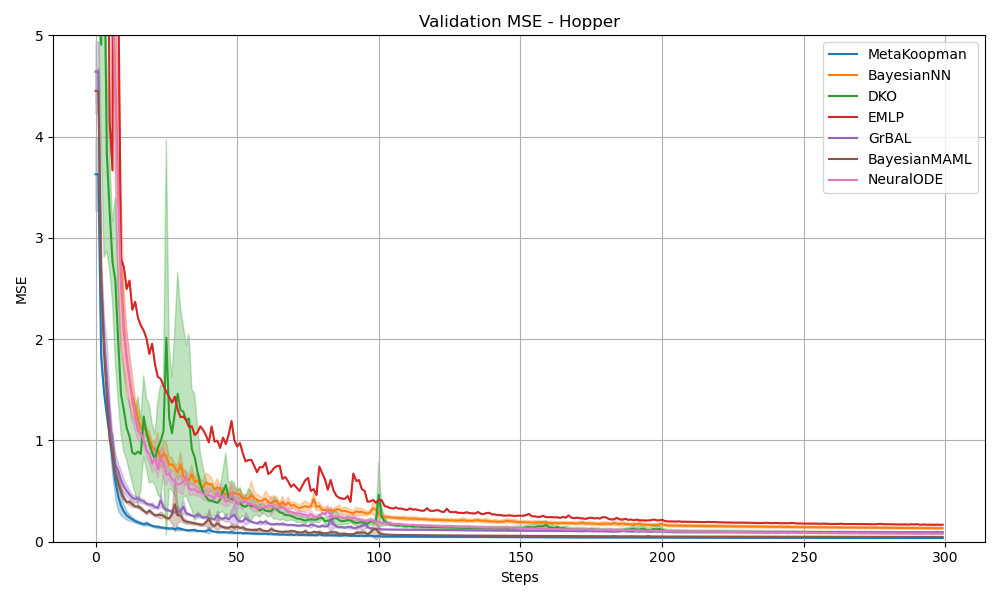}
    \caption{Hopper}
\end{subfigure}
\hfill
\begin{subfigure}{0.32\textwidth}
    \includegraphics[width=\linewidth]{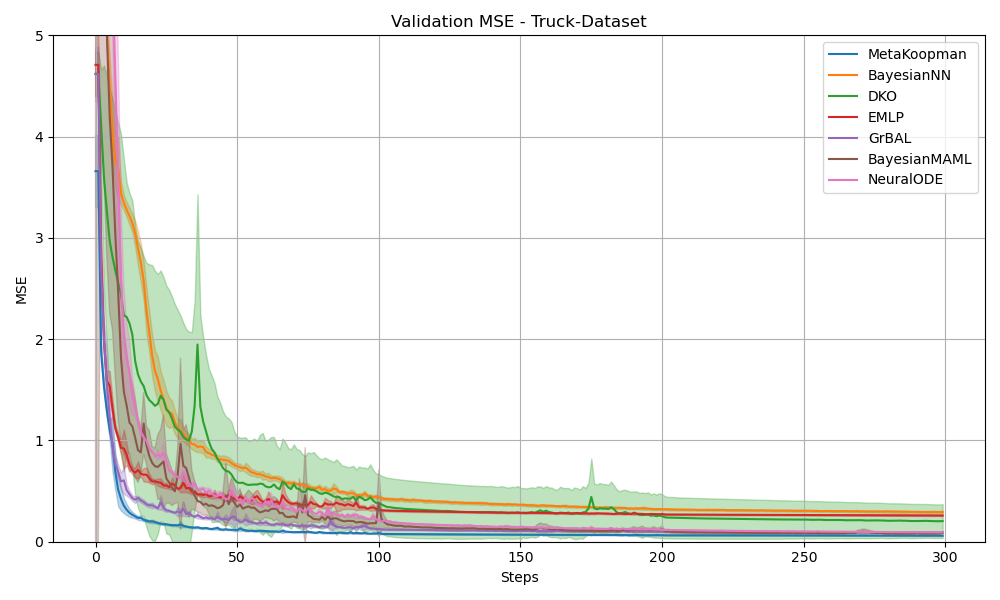}
    \caption{Truck Dataset}
\end{subfigure}
\caption{Validation MSE over time across selected environments. MetaKoopman consistently achieves lower prediction error and faster convergence compared to baselines.}
\label{fig:mse_curves}
\end{figure}
\begin{table}[t]
\centering
\caption{Final validation loss ($\pm$ std) across methods and environments. Best performance per environment is \textbf{bolded}.}
\label{tab:final_loss_summary}
\resizebox{\textwidth}{!}{
\begin{tabular}{lccccccc}
\toprule
\textbf{Environment} & \textbf{MetaKoopman} & \textbf{BayesianMAML} & \textbf{GrBAL} & \textbf{DKO} & \textbf{BayesianNN} & \textbf{EMLP} & \textbf{NeuralODE} \\
\midrule
\textbf{Hopper} & \textbf{0.0369 $\pm$ 0.0019} & 0.0475 $\pm$ 0.0027 & 0.0926 $\pm$ 0.0016 & 0.0913 $\pm$ 0.0051 & 0.1332 $\pm$ 0.0116 & 0.1683 $\pm$ 0.0000 & 0.0793 $\pm$ 0.0078 \\
\textbf{Truck-Dataset} & \textbf{0.0596 $\pm$ 0.0021} & 0.0888 $\pm$ 0.0100 & 0.0917 $\pm$ 0.0023 & 0.2042 $\pm$ 0.1674 & 0.2923 $\pm$ 0.0120 & 0.2589 $\pm$ 0.0053 & 0.0926 $\pm$ 0.0022 \\
\textbf{HalfCheetah-Slope} & \textbf{0.7490 $\pm$ 0.0117} & 0.8808 $\pm$ 0.0104 & 1.2800 $\pm$ 0.0028 & 1.2927 $\pm$ 0.0196 & 1.7008 $\pm$ 0.0374 & 1.4395 $\pm$ 0.0210 & 1.9981 $\pm$ 0.2123 \\
\bottomrule
\end{tabular}
}
\end{table}

\subsection{Predictive Dynamics under Distribution Shift}

We evaluate MetaKoopman’s predictive accuracy on HalfCheetah-Slope, Hopper-Gravity, and a real-world truck dataset. As shown in Figure~\ref{fig:mse_curves}, MetaKoopman consistently achieves lower MSE across all environments. Table~\ref{tab:final_loss_summary} further summarizes the final validation losses, where MetaKoopman outperforms other approaches in both simulated and real-world settings. The results demonstrate that the combination of structured Koopman modeling and Bayesian adaptation leads to accurate, data-efficient predictions that remain reliable under distributional shifts. Extended evaluations are provided in the appendix.

\newpage
\subsection{Does adaptation meaningfully change the model?}

\begin{wrapfigure}{r}{0.45\textwidth}
  \centering
  \includegraphics[width=0.42\textwidth]{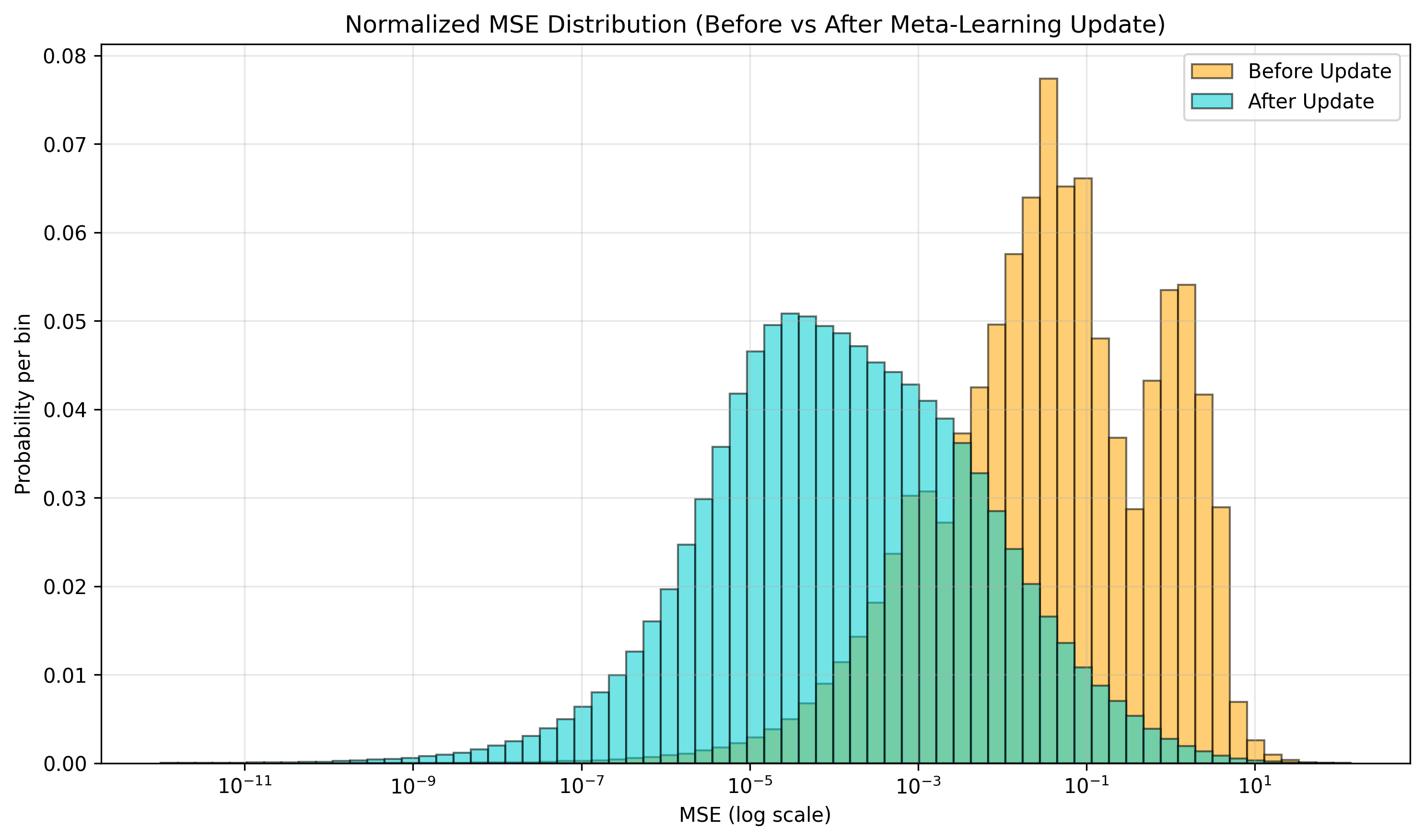}
  \caption{MSE distribution before and after Koopman operator adaptation on the truck dataset. The post-update distribution (blue) shows a shift toward lower errors compared to the pre-update (orange)}
  \label{fig:adaptation_histogram}
\end{wrapfigure}

To evaluate whether adaptation meaningfully alters the underlying dynamics model, we assess MetaKoopman’s capability to refine its predictions after incorporating recent trajectory history. Using the real-world truck dataset, we compare the distribution of normalized mean squared error (MSE) across forecasted trajectories before and after performing a Bayesian meta-update of the Koopman operator.

As shown in Fig.~\ref{fig:adaptation_histogram}, prior to adaptation the model exhibits higher prediction error, reflecting a mismatch between the meta-learned prior and the current operating conditions—such as changes in surface friction or tire–road interaction. After performing the Bayesian meta-update using recent trajectory data, the error distribution shifts noticeably toward lower values. This shift indicates improved alignment with the evolving system dynamics, refining its predictive model to remain consistent with the observed environment.

\subsection{Closed-Loop Impact: Safer Planning Through Adaptive Uncertainty-Aware Dynamics}
\label{sec:experiments:real}

While learning-based dynamics models have shown promise in simulation, their application to real-world autonomous systems remains limited. A key challenge is how to evaluate not just predictive accuracy, but also how such models influence downstream decision-making—especially in conditions that involve distribution shifts, environmental uncertainty, and high-stakes planning. Do these models meaningfully adapt in time? Can they influence control choices that improve safety or robustness?

To answer these questions, we conduct a full-scale deployment on a 37.5-ton autonomous truck and trailer system during a series of tests in a northern climate during winter 2025. These tests were carried out on a closed-course proving ground designed to simulate extreme low-traction environments, including snow-covered roads, polished ice, and mixed-friction (mu-split) surfaces.\footnote{All tests were conducted with the assistance of a professional safety driver.}

\textbf{Precise stops on polished ice.} In the first scenario, the vehicle is required to stop at a designated marker after entering a zone of dramatically reduced surface friction. The road transitions from high-grip snow to polished ice, creating a distribution shift that significantly alters braking dynamics. Models that do not adapt to this shift underestimate stopping distance and consistently overshoot the target. As shown in Figure~\ref{fig:precisestop}, MetaKoopman adapts online to the changing traction conditions, adjusting its predictive dynamics model using recent trajectory data. 

\begin{figure}[t]
\centering
\includegraphics[width=\linewidth]{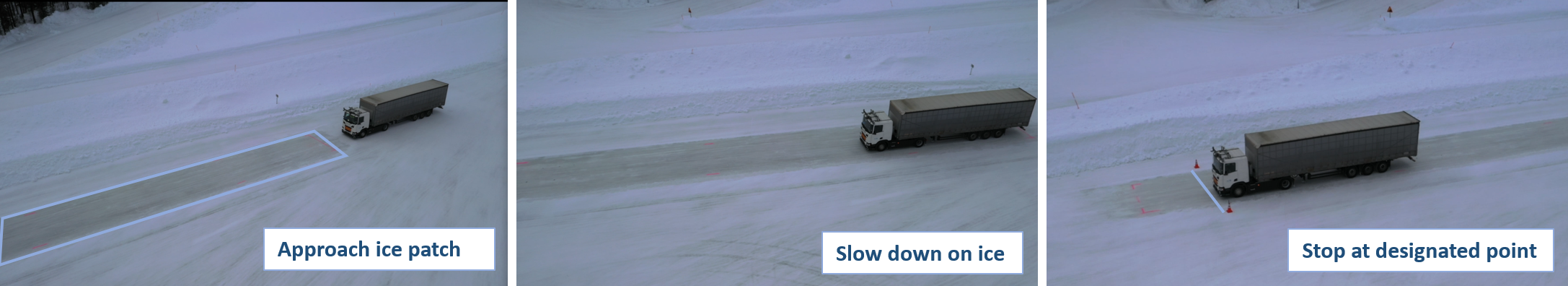} 
\caption{
As the truck approaches a previously unobserved ice patch on a slippery road, MetaKoopman enables the planner to adapt, allowing the vehicle to brake accurately on a low friction surface.
}
\label{fig:precisestop}
\end{figure}

\textbf{Evasive lane change on snow.} In this scenario, a virtual obstacle appears unexpectedly on a snow-covered road. The system must immediately decide whether to brake or initiate a lane change. Due to low traction, braking is physically infeasible, but non-adaptive models fail to account for this and attempt to stop, resulting in overshoot or collision. As shown in Figure~\ref{fig:lanechange}, MetaKoopman detects the distribution shift through increased prediction error and adapts its dynamics model using recent trajectory data. This allows the planner to accurately assess braking feasibility and execute a safe evasive maneuver. The result is a controlled lane change that avoids the obstacle while respecting vehicle dynamics and control limits. More on the real-world setup and experiments can be found in the appendix.
\begin{figure*}[t]
\centering
\begin{subfigure}{0.19\textwidth}
    \includegraphics[width=\linewidth,height=3.5cm]{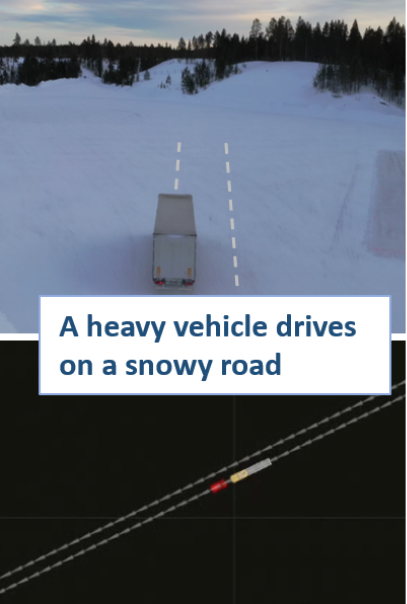}
    \caption{}
\end{subfigure}
\hfill
\begin{subfigure}{0.19\textwidth}
    \includegraphics[width=\linewidth,height=3.5cm]{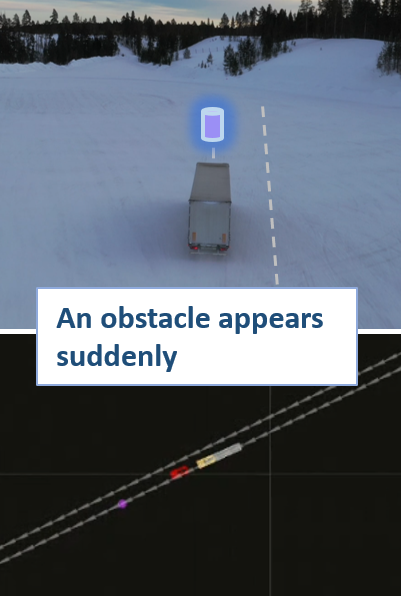}
    \caption{}
\end{subfigure}
\hfill
\begin{subfigure}{0.19\textwidth}
    \includegraphics[width=\linewidth,height=3.5cm]{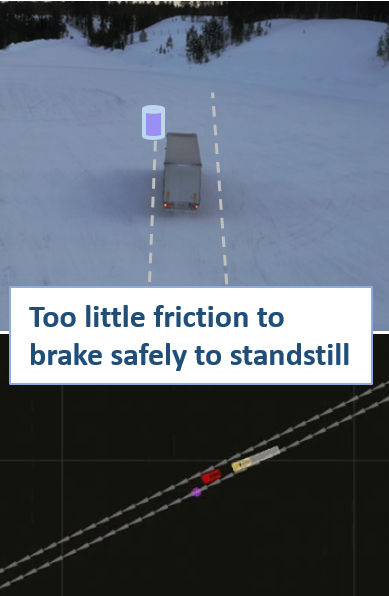}
    \caption{}
\end{subfigure}
\hfill
\begin{subfigure}{0.19\textwidth}
    \includegraphics[width=\linewidth,height=3.5cm]{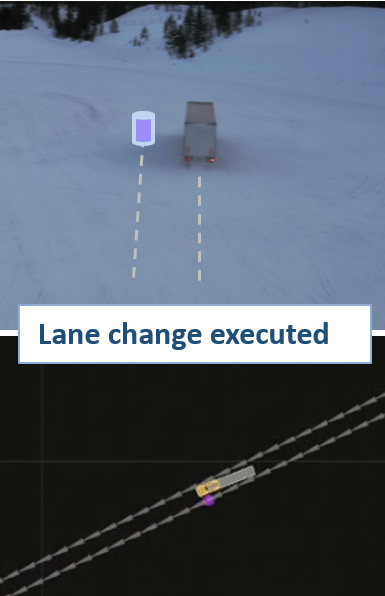}
    \caption{}
\end{subfigure}
\hfill
\begin{subfigure}{0.19\textwidth}
    \includegraphics[width=\linewidth,height=3.5cm]{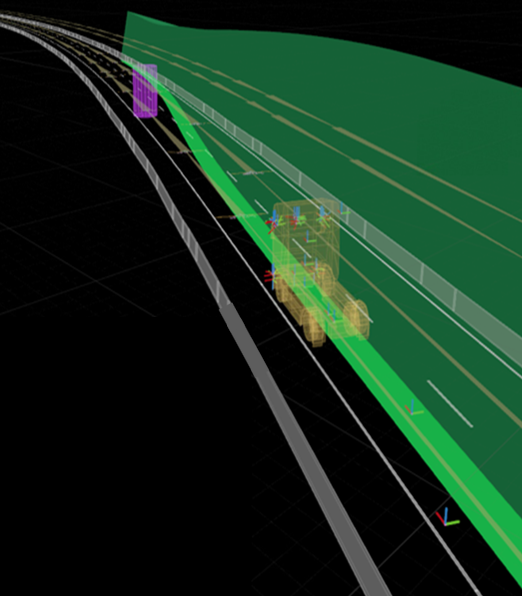}
    \caption{}
\end{subfigure}
\caption{
\textbf{Evasive lane change on snow.} \textbf{(a–d):} A virtual obstacle appears mid-lane on a low-traction road. Non-adaptive models underestimate stopping distance and attempt braking, resulting in failure. MetaKoopman adapts online and executes a safe lane change to avoid collision. \textbf{(e):} Sensor visualization showing obstacle detection and the feasible trajectory selected by the planner.
}
\label{fig:lanechange}
\end{figure*}

\section{Discussion}
\label{sec:discussion}

MetaKoopman bridges structured modeling and meta-learning through a tractable, uncertainty-aware framework for online adaptation under distributional shift. In our formulation, uncertainty is modeled explicitly over the \textit{Koopman operator}, while the encoder remains deterministic. This design choice is deliberate: in many practical settings---particularly those involving distributional shifts---the dominant source of variability arises from changes in system dynamics rather than in the representation itself. By keeping the encoder fixed and meta-learning only a prior over the Koopman operator, MetaKoopman focuses uncertainty modeling where it matters most---in the evolving dynamics. This assumption is justified in structured environments where the encoder is trained across diverse dynamics, yielding stable latent representations while uncertainty in the learned dynamics governs predictive quality.

Several works have explored \textit{representation-level uncertainty} by placing distributions over latent embeddings, such as variational or stochastic encoders in dynamical systems \citep{fraccaro2017disentangled, yildiz2019ode2vae, han2022desko}. Integrating such approaches could further capture uncertainty in sparse or ambiguous regions, complementing the current operator-level formulation. Overall, capturing epistemic uncertainty over the Koopman operator via the conjugate MNIW prior enables data-efficient, closed-form Bayesian updates suitable for real-time planning and control. While extending uncertainty to the representation space is a promising future direction, our results show that the present formulation already achieves strong predictive accuracy and reliable uncertainty calibration under distributional shift.

More broadly, MetaKoopman's structure suggests several directions for future work. These include integration with reinforcement learning for uncertainty-aware exploration, extension to multi-agent or partially observable environments, and application to decision-critical planning pipelines. Our results show that closed-form Bayesian meta-learning is not only tractable but viable in real-world autonomous systems, offering a practical path toward principled uncertainty in safety-critical control.

\section{Conclusion}
\label{sec:conclusion}
We introduced MetaKoopman, a Bayesian meta-learning framework for structured dynamics modeling under distribution shift. By meta-learning a conjugate prior over Koopman operators, MetaKoopman enables closed-form posterior updates from short trajectory segments and produces calibrated posterior predictive distributions for planning. Our experiments demonstrate that MetaKoopman consistently improves multi-step prediction accuracy, adapts effectively to new dynamics, and enables safer closed-loop control in real-world deployment on a heavy-duty autonomous truck. These results show that tractable Bayesian meta-learning can scale to safety-critical systems, and open the door to future extensions in uncertainty-aware control, reinforcement learning, and real-time robotics.

\newpage
\section*{Acknowledgments}
This work was supported by Vinnova, Sweden, through the AllDrive project. We also want to express our gratitude to the AllDrive team for their valuable contributions, collaboration, and support throughout the winter tests.
\bibliographystyle{plain}
\bibliography{references}

@article{rope,
  title={Roformer: Enhanced transformer with rotary position embedding},
  author={Su, Jianlin and Ahmed, Murtadha and Lu, Yu and Pan, Shengfeng and Bo, Wen and Liu, Yunfeng},
  journal={Neurocomputing},
  volume={568},
  pages={127063},
  year={2024},
  publisher={Elsevier}
}

@article{koopman1931hamiltonian,
  title={Hamiltonian systems and transformation in Hilbert space},
  author={Koopman, Bernard O},
  journal={Proceedings of the National Academy of Sciences},
  volume={17},
  number={5},
  pages={315--318},
  year={1931},
  publisher={National Acad Sciences}
}

@article{schmid2011applications,
  title={Applications of the dynamic mode decomposition},
  author={Schmid, Peter J and Li, Larry and Juniper, Matthew P and Pust, Oliver},
  journal={Theoretical and computational fluid dynamics},
  volume={25},
  pages={249--259},
  year={2011},
  publisher={Springer}
}

@article{schmid2010dynamic,
  title={Dynamic mode decomposition of numerical and experimental data},
  author={Schmid, Peter J},
  journal={Journal of fluid mechanics},
  volume={656},
  pages={5--28},
  year={2010},
  publisher={Cambridge University Press}
}

@article{williams2015data,
  title={A data--driven approximation of the koopman operator: Extending dynamic mode decomposition},
  author={Williams, Matthew O and Kevrekidis, Ioannis G and Rowley, Clarence W},
  journal={Journal of Nonlinear Science},
  volume={25},
  pages={1307--1346},
  year={2015},
  publisher={Springer}
}

@article{li2017extended,
  title={Extended dynamic mode decomposition with dictionary learning: A data-driven adaptive spectral decomposition of the Koopman operator},
  author={Li, Qianxiao and Dietrich, Felix and Bollt, Erik M and Kevrekidis, Ioannis G},
  journal={Chaos: An Interdisciplinary Journal of Nonlinear Science},
  volume={27},
  number={10},
  year={2017},
  publisher={AIP Publishing}
}

@article{proctor2016dynamic,
  title={Dynamic mode decomposition with control},
  author={Proctor, Joshua L and Brunton, Steven L and Kutz, J Nathan},
  journal={SIAM Journal on Applied Dynamical Systems},
  volume={15},
  number={1},
  pages={142--161},
  year={2016},
  publisher={SIAM}
}

@article{lusch2018deep,
  title={Deep learning for universal linear embeddings of nonlinear dynamics},
  author={Lusch, Bethany and Kutz, J Nathan and Brunton, Steven L},
  journal={Nature communications},
  volume={9},
  number={1},
  pages={4950},
  year={2018},
  publisher={Nature Publishing Group UK London}
}

@article{otto2019linearly,
  title={Linearly recurrent autoencoder networks for learning dynamics},
  author={Otto, Samuel E and Rowley, Clarence W},
  journal={SIAM Journal on Applied Dynamical Systems},
  volume={18},
  number={1},
  pages={558--593},
  year={2019},
  publisher={SIAM}
}

@inproceedings{yeung2019learning,
  title={Learning deep neural network representations for Koopman operators of nonlinear dynamical systems},
  author={Yeung, Enoch and Kundu, Soumya and Hodas, Nathan},
  booktitle={2019 American Control Conference (ACC)},
  pages={4832--4839},
  year={2019},
  organization={IEEE}
}

@article{takeishi2017learning,
  title={Learning Koopman invariant subspaces for dynamic mode decomposition},
  author={Takeishi, Naoya and Kawahara, Yoshinobu and Yairi, Takehisa},
  journal={Advances in neural information processing systems},
  volume={30},
  year={2017}
}

@article{mamakoukas2021derivative,
  title={Derivative-based Koopman operators for real-time control of robotic systems},
  author={Mamakoukas, Giorgos and Castano, Maria L and Tan, Xiaobo and Murphey, Todd D},
  journal={IEEE Transactions on Robotics},
  volume={37},
  number={6},
  pages={2173--2192},
  year={2021},
  publisher={IEEE}
}

@article{korda2018linear,
  title={Linear predictors for nonlinear dynamical systems: Koopman operator meets model predictive control},
  author={Korda, Milan and Mezi{\'c}, Igor},
  journal={Automatica},
  volume={93},
  pages={149--160},
  year={2018},
  publisher={Elsevier}
}

@article{abraham2017model,
  title={Model-based control using Koopman operators},
  author={Abraham, Ian and De La Torre, Gerardo and Murphey, Todd D},
  journal={arXiv preprint arXiv:1709.01568},
  year={2017}
}

@article{bruder2020data,
  title={Data-driven control of soft robots using Koopman operator theory},
  author={Bruder, Daniel and Fu, Xun and Gillespie, R Brent and Remy, C David and Vasudevan, Ram},
  journal={IEEE Transactions on Robotics},
  volume={37},
  number={3},
  pages={948--961},
  year={2020},
  publisher={IEEE}
}

@article{shi2022deep,
  title={Deep Koopman operator with control for nonlinear systems},
  author={Shi, Haojie and Meng, Max Q-H},
  journal={IEEE Robotics and Automation Letters},
  volume={7},
  number={3},
  pages={7700--7707},
  year={2022},
  publisher={IEEE}
}

@article{cibulka2020model,
  title={Model predictive control of a vehicle using Koopman operator},
  author={Cibulka, V{\'\i}t and Hani{\v{s}}, Tom{\'a}{\v{s}} and Korda, Milan and Hrom{\v{c}}{\'\i}k, Martin},
  journal={IFAC-PapersOnLine},
  volume={53},
  number={2},
  pages={4228--4233},
  year={2020},
  publisher={Elsevier}
}

@inproceedings{wang2021deep,
  title={Deep koopman data-driven control framework for autonomous racing},
  author={Wang, Rongyao and Han, Yiqiang and Vaidya, Umesh},
  booktitle={Proc. Int. Conf. Robot. Autom.(ICRA) Workshop Opportunities Challenges Auton. Racing},
  pages={1--6},
  year={2021}
}

@article{lakshminarayanan2017simple,
  title={Simple and scalable predictive uncertainty estimation using deep ensembles},
  author={Lakshminarayanan, Balaji and Pritzel, Alexander and Blundell, Charles},
  journal={Advances in neural information processing systems},
  volume={30},
  year={2017}
}

@article{wang2022koopman,
  title={Koopman neural forecaster for time series with temporal distribution shifts},
  author={Wang, Rui and Dong, Yihe and Arik, Sercan {\"O} and Yu, Rose},
  journal={arXiv preprint arXiv:2210.03675},
  year={2022}
}

@book{murphy2023probabilistic,
  title={Probabilistic machine learning: Advanced topics},
  author={Murphy, Kevin P},
  year={2023},
  publisher={MIT press}
}

@inproceedings{takens2006detecting,
  title={Detecting strange attractors in turbulence},
  author={Takens, Floris},
  booktitle={Dynamical Systems and Turbulence, Warwick 1980: proceedings of a symposium held at the University of Warwick 1979/80},
  pages={366--381},
  year={2006},
  organization={Springer}
}

@article{chen2018neural,
  title={Neural ordinary differential equations},
  author={Chen, Ricky TQ and Rubanova, Yulia and Bettencourt, Jesse and Duvenaud, David K},
  journal={Advances in neural information processing systems},
  volume={31},
  year={2018}
}

@article{nagabandi2019learning,
  title={Learning to adapt in dynamic, real-world environments through meta-reinforcement learning},
  author={Nagabandi, Anusha and Clavera, Ignasi and Liu, Simin and Fearing, Ronald S and Abbeel, Pieter and Levine, Sergey and Finn, Chelsea},
  journal={arXiv preprint arXiv:1803.11347},
  year={2018}
}

@inproceedings{finn2017model,
  title={Model-agnostic meta-learning for fast adaptation of deep networks},
  author={Finn, Chelsea and Abbeel, Pieter and Levine, Sergey},
  booktitle={International conference on machine learning},
  pages={1126--1135},
  year={2017},
  organization={PMLR}
}

@article{saemundsson2018meta,
  title={Meta reinforcement learning with latent variable gaussian processes},
  author={S{\ae}mundsson, Steind{\'o}r and Hofmann, Katja and Deisenroth, Marc Peter},
  journal={arXiv preprint arXiv:1803.07551},
  year={2018}
}

@article{duan2016rl,
  title={Rl \^{} 2: Fast reinforcement learning via slow reinforcement learning},
  author={Duan, Yan and Schulman, John and Chen, Xi and Bartlett, Peter L and Sutskever, Ilya and Abbeel, Pieter},
  journal={arXiv preprint arXiv:1611.02779},
  year={2016}
}

@inproceedings{clavera2018model,
  title={Model-based reinforcement learning via meta-policy optimization},
  author={Clavera, Ignasi and Rothfuss, Jonas and Schulman, John and Fujita, Yasuhiro and Asfour, Tamim and Abbeel, Pieter},
  booktitle={Conference on Robot Learning},
  pages={617--629},
  year={2018},
  organization={PMLR}
}

@article{braun2010structure,
  title={Structure learning in action},
  author={Braun, Daniel A and Mehring, Carsten and Wolpert, Daniel M},
  journal={Behavioural brain research},
  volume={206},
  number={2},
  pages={157--165},
  year={2010},
  publisher={Elsevier}
}

@article{bayesianmaml,
  title={Bayesian model-agnostic meta-learning},
  author={Yoon, Jaesik and Kim, Taesup and Dia, Ousmane and Kim, Sungwoong and Bengio, Yoshua and Ahn, Sungjin},
  journal={Advances in neural information processing systems},
  volume={31},
  year={2018}
}

@article{finn2018probabilistic,
  title={Probabilistic model-agnostic meta-learning},
  author={Finn, Chelsea and Xu, Kelvin and Levine, Sergey},
  journal={Advances in neural information processing systems},
  volume={31},
  year={2018}
}

@article{nichol2018reptile,
  title={Reptile: a scalable metalearning algorithm},
  author={Nichol, Alex and Schulman, John},
  journal={arXiv preprint arXiv:1803.02999},
  volume={2},
  number={3},
  pages={4},
  year={2018}
}

@inproceedings{rakelly2019efficient,
  title={Efficient off-policy meta-reinforcement learning via probabilistic context variables},
  author={Rakelly, Kate and Zhou, Aurick and Finn, Chelsea and Levine, Sergey and Quillen, Deirdre},
  booktitle={International conference on machine learning},
  pages={5331--5340},
  year={2019},
  organization={PMLR}
}

@article{rusu2018meta,
  title={Meta-learning with latent embedding optimization},
  author={Rusu, Andrei A and Rao, Dushyant and Sygnowski, Jakub and Vinyals, Oriol and Pascanu, Razvan and Osindero, Simon and Hadsell, Raia},
  journal={arXiv preprint arXiv:1807.05960},
  year={2018}
}

@article{zhang2019online,
  title={Online dynamic mode decomposition for time-varying systems},
  author={Zhang, Hao and Rowley, Clarence W and Deem, Eric A and Cattafesta, Louis N},
  journal={SIAM Journal on Applied Dynamical Systems},
  volume={18},
  number={3},
  pages={1586--1609},
  year={2019},
  publisher={SIAM}
}

@article{hao2022deep,
  title={Deep Koopman Learning of Nonlinear Time-Varying Systems},
  author={Hao, Wenjian and Huang, Bowen and Pan, Wei and Wu, Di and Mou, Shaoshuai},
  journal={arXiv preprint arXiv:2210.06272},
  year={2022}
}

@article{yildiz2019ode2vae,
  title={Ode2vae: Deep generative second order odes with bayesian neural networks},
  author={Yildiz, Cagatay and Heinonen, Markus and Lahdesmaki, Harri},
  journal={Advances in Neural Information Processing Systems},
  volume={32},
  year={2019}
}

@inproceedings{han2022desko,
  title={DeSKO: Stability-assured robust control with a deep stochastic Koopman operator},
  author={Han, Minghao and Euler-Rolle, Jacob and Katzschmann, Robert K},
  booktitle={International conference on learning representations (ICLR)},
  year={2022}
}

@article{fraccaro2017disentangled,
  title={A disentangled recognition and nonlinear dynamics model for unsupervised learning},
  author={Fraccaro, Marco and Kamronn, Simon and Paquet, Ulrich and Winther, Ole},
  journal={Advances in neural information processing systems},
  volume={30},
  year={2017}
}

@article{dorfman2021offline,
  title={Offline Meta Reinforcement Learning--Identifiability Challenges and Effective Data Collection Strategies},
  author={Dorfman, Ron and Shenfeld, Idan and Tamar, Aviv},
  journal={Advances in Neural Information Processing Systems},
  volume={34},
  pages={4607--4618},
  year={2021}
}

@inproceedings{todorov2012mujoco,
  title={Mujoco: A physics engine for model-based control},
  author={Todorov, Emanuel and Erez, Tom and Tassa, Yuval},
  booktitle={2012 IEEE/RSJ international conference on intelligent robots and systems},
  pages={5026--5033},
  year={2012},
  organization={IEEE}
}

@article{chen2024korol,
  title={Korol: Learning visualizable object feature with koopman operator rollout for manipulation},
  author={Chen, Hongyi and Abuduweili, Abulikemu and Agrawal, Aviral and Han, Yunhai and Ravichandar, Harish and Liu, Changliu and Ichnowski, Jeffrey},
  journal={arXiv preprint arXiv:2407.00548},
  year={2024}
}

@article{shi2024koopman,
  title={Koopman operators in robot learning},
  author={Shi, Lu and Haseli, Masih and Mamakoukas, Giorgos and Bruder, Daniel and Abraham, Ian and Murphey, Todd and Cort{\'e}s, Jorge and Karydis, Konstantinos},
  journal={arXiv preprint arXiv:2408.04200},
  year={2024}
}

@inproceedings{manaa2024koopman,
  title={Koopman-LQR controller for quadrotor UAVs from data},
  author={Manaa, Zeyad M and Abdallah, Ayman M and Abido, Mohammad A and Ali, Syed S Azhar},
  booktitle={2024 IEEE International Conference on Smart Mobility (SM)},
  pages={153--158},
  year={2024},
  organization={IEEE}
}

@article{singh2024adaptive,
  title={Adaptive Koopman embedding for robust control of complex nonlinear dynamical systems},
  author={Singh, Rajpal and Sah, Chandan Kumar and Keshavan, Jishnu},
  journal={arXiv preprint arXiv:2405.09101},
  year={2024}
}

@article{kingma2013auto,
  title={Auto-encoding variational bayes},
  author={Kingma, Diederik P and Welling, Max},
  journal={arXiv preprint arXiv:1312.6114},
  year={2013}
}

\section*{NeurIPS Paper Checklist}
\begin{enumerate}

\item {\bf Claims}
    \item[] Question: Do the main claims made in the abstract and introduction accurately reflect the paper's contributions and scope?
    \item[] Answer: \answerYes{} 
    \item[] Justification: The claims made in both the abstract and introduction are discussed in detail in the Experimental Section (Sec.\ref{sec:experiments}) and further elaborated upon in the appendix.
    \item[] Guidelines:
    \begin{itemize}
        \item The answer NA means that the abstract and introduction do not include the claims made in the paper.
        \item The abstract and/or introduction should clearly state the claims made, including the contributions made in the paper and important assumptions and limitations. A No or NA answer to this question will not be perceived well by the reviewers. 
        \item The claims made should match theoretical and experimental results, and reflect how much the results can be expected to generalize to other settings. 
        \item It is fine to include aspirational goals as motivation as long as it is clear that these goals are not attained by the paper. 
    \end{itemize}

\item {\bf Limitations}
    \item[] Question: Does the paper discuss the limitations of the work performed by the authors?
    \item[] Answer: \answerYes{} 
    \item[] Justification: The limitations are discussed in detail in Section\ref{sec:discussion}.
    \item[] Guidelines:
    \begin{itemize}
        \item The answer NA means that the paper has no limitation while the answer No means that the paper has limitations, but those are not discussed in the paper. 
        \item The authors are encouraged to create a separate "Limitations" section in their paper.
        \item The paper should point out any strong assumptions and how robust the results are to violations of these assumptions (e.g., independence assumptions, noiseless settings, model well-specification, asymptotic approximations only holding locally). The authors should reflect on how these assumptions might be violated in practice and what the implications would be.
        \item The authors should reflect on the scope of the claims made, e.g., if the approach was only tested on a few datasets or with a few runs. In general, empirical results often depend on implicit assumptions, which should be articulated.
        \item The authors should reflect on the factors that influence the performance of the approach. For example, a facial recognition algorithm may perform poorly when image resolution is low or images are taken in low lighting. Or a speech-to-text system might not be used reliably to provide closed captions for online lectures because it fails to handle technical jargon.
        \item The authors should discuss the computational efficiency of the proposed algorithms and how they scale with dataset size.
        \item If applicable, the authors should discuss possible limitations of their approach to address problems of privacy and fairness.
        \item While the authors might fear that complete honesty about limitations might be used by reviewers as grounds for rejection, a worse outcome might be that reviewers discover limitations that aren't acknowledged in the paper. The authors should use their best judgment and recognize that individual actions in favor of transparency play an important role in developing norms that preserve the integrity of the community. Reviewers will be specifically instructed to not penalize honesty concerning limitations.
    \end{itemize}

\item {\bf Theory assumptions and proofs}
    \item[] Question: For each theoretical result, does the paper provide the full set of assumptions and a complete (and correct) proof?
    \item[] Answer: \answerYes{} 
    \item[] Justification: The assumptions are outlined in the Methodology (Section\ref{sec:method}) and further elaborated in the appendix. The theoretical proof of Lemma \ref{lemma:posterior_predictive} is also provided in the appendix.
    \item[] Guidelines:
    \begin{itemize}
        \item The answer NA means that the paper does not include theoretical results. 
        \item All the theorems, formulas, and proofs in the paper should be numbered and cross-referenced.
        \item All assumptions should be clearly stated or referenced in the statement of any theorems.
        \item The proofs can either appear in the main paper or the supplemental material, but if they appear in the supplemental material, the authors are encouraged to provide a short proof sketch to provide intuition. 
        \item Inversely, any informal proof provided in the core of the paper should be complemented by formal proofs provided in appendix or supplemental material.
        \item Theorems and Lemmas that the proof relies upon should be properly referenced. 
    \end{itemize}

    \item {\bf Experimental result reproducibility}
    \item[] Question: Does the paper fully disclose all the information needed to reproduce the main experimental results of the paper to the extent that it affects the main claims and/or conclusions of the paper (regardless of whether the code and data are provided or not)?
    \item[] Answer: \answerYes{} 
    \item[] Justification: Some details regarding the simulation environments are provided in Section \ref{sec:experiments} and further discussed, along with implementation specifics, in the appendix.
    \item[] Guidelines:
    \begin{itemize}
        \item The answer NA means that the paper does not include experiments.
        \item If the paper includes experiments, a No answer to this question will not be perceived well by the reviewers: Making the paper reproducible is important, regardless of whether the code and data are provided or not.
        \item If the contribution is a dataset and/or model, the authors should describe the steps taken to make their results reproducible or verifiable. 
        \item Depending on the contribution, reproducibility can be accomplished in various ways. For example, if the contribution is a novel architecture, describing the architecture fully might suffice, or if the contribution is a specific model and empirical evaluation, it may be necessary to either make it possible for others to replicate the model with the same dataset, or provide access to the model. In general. releasing code and data is often one good way to accomplish this, but reproducibility can also be provided via detailed instructions for how to replicate the results, access to a hosted model (e.g., in the case of a large language model), releasing of a model checkpoint, or other means that are appropriate to the research performed.
        \item While NeurIPS does not require releasing code, the conference does require all submissions to provide some reasonable avenue for reproducibility, which may depend on the nature of the contribution. For example
        \begin{enumerate}
            \item If the contribution is primarily a new algorithm, the paper should make it clear how to reproduce that algorithm.
            \item If the contribution is primarily a new model architecture, the paper should describe the architecture clearly and fully.
            \item If the contribution is a new model (e.g., a large language model), then there should either be a way to access this model for reproducing the results or a way to reproduce the model (e.g., with an open-source dataset or instructions for how to construct the dataset).
            \item We recognize that reproducibility may be tricky in some cases, in which case authors are welcome to describe the particular way they provide for reproducibility. In the case of closed-source models, it may be that access to the model is limited in some way (e.g., to registered users), but it should be possible for other researchers to have some path to reproducing or verifying the results.
        \end{enumerate}
    \end{itemize}

\item {\bf Open access to data and code}
    \item[] Question: Does the paper provide open access to the data and code, with sufficient instructions to faithfully reproduce the main experimental results, as described in supplemental material?
    \item[] Answer: \answerNo{(Partially)} 
    \item[] Justification: We are unable to open-source the truck dataset and the code for the truck planning experiments; however, further details are provided in the appendix. In contrast, the code for the simulated environments and the associated training pipelines is made available and documented in the supplementary material.
    \item[] Guidelines:
    \begin{itemize}
        \item The answer NA means that paper does not include experiments requiring code.
        \item Please see the NeurIPS code and data submission guidelines (\url{https://nips.cc/public/guides/CodeSubmissionPolicy}) for more details.
        \item While we encourage the release of code and data, we understand that this might not be possible, so “No” is an acceptable answer. Papers cannot be rejected simply for not including code, unless this is central to the contribution (e.g., for a new open-source benchmark).
        \item The instructions should contain the exact command and environment needed to run to reproduce the results. See the NeurIPS code and data submission guidelines (\url{https://nips.cc/public/guides/CodeSubmissionPolicy}) for more details.
        \item The authors should provide instructions on data access and preparation, including how to access the raw data, preprocessed data, intermediate data, and generated data, etc.
        \item The authors should provide scripts to reproduce all experimental results for the new proposed method and baselines. If only a subset of experiments are reproducible, they should state which ones are omitted from the script and why.
        \item At submission time, to preserve anonymity, the authors should release anonymized versions (if applicable).
        \item Providing as much information as possible in supplemental material (appended to the paper) is recommended, but including URLs to data and code is permitted.
    \end{itemize}

\item {\bf Experimental setting/details}
    \item[] Question: Does the paper specify all the training and test details (e.g., data splits, hyperparameters, how they were chosen, type of optimizer, etc.) necessary to understand the results?
    \item[] Answer: \answerYes{} 
    \item[] Justification: Key experimental details, including the environments, are provided in Section\ref{sec:experiments}, with comprehensive information available in the appendix.
    \item[] Guidelines:
    \begin{itemize}
        \item The answer NA means that the paper does not include experiments.
        \item The experimental setting should be presented in the core of the paper to a level of detail that is necessary to appreciate the results and make sense of them.
        \item The full details can be provided either with the code, in appendix, or as supplemental material.
    \end{itemize}

\item {\bf Experiment statistical significance}
    \item[] Question: Does the paper report error bars suitably and correctly defined or other appropriate information about the statistical significance of the experiments?
    \item[] Answer: \answerYes{} 
    \item[] Justification: The paper includes holdout experiments, and for the simulated environments, the test data distribution is intentionally varied from the training data. Additionally, MSE error plots with standard deviations are provided to assess statistical significance. The results are presented in Section \ref{sec:experiments} and further detailed in the appendix.
    \item[] Guidelines:
    \begin{itemize}
        \item The answer NA means that the paper does not include experiments.
        \item The authors should answer "Yes" if the results are accompanied by error bars, confidence intervals, or statistical significance tests, at least for the experiments that support the main claims of the paper.
        \item The factors of variability that the error bars are capturing should be clearly stated (for example, train/test split, initialization, random drawing of some parameter, or overall run with given experimental conditions).
        \item The method for calculating the error bars should be explained (closed form formula, call to a library function, bootstrap, etc.)
        \item The assumptions made should be given (e.g., Normally distributed errors).
        \item It should be clear whether the error bar is the standard deviation or the standard error of the mean.
        \item It is OK to report 1-sigma error bars, but one should state it. The authors should preferably report a 2-sigma error bar than state that they have a 96\% CI, if the hypothesis of Normality of errors is not verified.
        \item For asymmetric distributions, the authors should be careful not to show in tables or figures symmetric error bars that would yield results that are out of range (e.g. negative error rates).
        \item If error bars are reported in tables or plots, The authors should explain in the text how they were calculated and reference the corresponding figures or tables in the text.
    \end{itemize}

\item {\bf Experiments compute resources}
    \item[] Question: For each experiment, does the paper provide sufficient information on the computer resources (type of compute workers, memory, time of execution) needed to reproduce the experiments?
    \item[] Answer: \answerYes{} 
    \item[] Justification:Details regarding computational resources and training configurations are provided in the appendix.
    \item[] Guidelines:
    \begin{itemize}
        \item The answer NA means that the paper does not include experiments.
        \item The paper should indicate the type of compute workers CPU or GPU, internal cluster, or cloud provider, including relevant memory and storage.
        \item The paper should provide the amount of compute required for each of the individual experimental runs as well as estimate the total compute. 
        \item The paper should disclose whether the full research project required more compute than the experiments reported in the paper (e.g., preliminary or failed experiments that didn't make it into the paper). 
    \end{itemize}
    
\item {\bf Code of ethics}
    \item[] Question: Does the research conducted in the paper conform, in every respect, with the NeurIPS Code of Ethics \url{https://neurips.cc/public/EthicsGuidelines}?
    \item[] Answer: \answerYes{} 
    \item[] Justification: This research was conducted in accordance with the NeurIPS Code of Ethics.
    \item[] Guidelines:
    \begin{itemize}
        \item The answer NA means that the authors have not reviewed the NeurIPS Code of Ethics.
        \item If the authors answer No, they should explain the special circumstances that require a deviation from the Code of Ethics.
        \item The authors should make sure to preserve anonymity (e.g., if there is a special consideration due to laws or regulations in their jurisdiction).
    \end{itemize}

\item {\bf Broader impacts}
    \item[] Question: Does the paper discuss both potential positive societal impacts and negative societal impacts of the work performed?
    \item[] Answer: \answerYes{} 
    \item[] Justification: The methodology is closely related to self-driving vehicles and other safety-critical systems. A significant focus of this work is on enhancing the safety and reliability of models used in such domains, as discussed in the Introduction (Section \ref{sec:intro}).
    \item[] Guidelines:
    \begin{itemize}
        \item The answer NA means that there is no societal impact of the work performed.
        \item If the authors answer NA or No, they should explain why their work has no societal impact or why the paper does not address societal impact.
        \item Examples of negative societal impacts include potential malicious or unintended uses (e.g., disinformation, generating fake profiles, surveillance), fairness considerations (e.g., deployment of technologies that could make decisions that unfairly impact specific groups), privacy considerations, and security considerations.
        \item The conference expects that many papers will be foundational research and not tied to particular applications, let alone deployments. However, if there is a direct path to any negative applications, the authors should point it out. For example, it is legitimate to point out that an improvement in the quality of generative models could be used to generate deepfakes for disinformation. On the other hand, it is not needed to point out that a generic algorithm for optimizing neural networks could enable people to train models that generate Deepfakes faster.
        \item The authors should consider possible harms that could arise when the technology is being used as intended and functioning correctly, harms that could arise when the technology is being used as intended but gives incorrect results, and harms following from (intentional or unintentional) misuse of the technology.
        \item If there are negative societal impacts, the authors could also discuss possible mitigation strategies (e.g., gated release of models, providing defenses in addition to attacks, mechanisms for monitoring misuse, mechanisms to monitor how a system learns from feedback over time, improving the efficiency and accessibility of ML).
    \end{itemize}
    
\item {\bf Safeguards}
    \item[] Question: Does the paper describe safeguards that have been put in place for responsible release of data or models that have a high risk for misuse (e.g., pretrained language models, image generators, or scraped datasets)?
    \item[] Answer: \answerNA{} 
    \item[] Justification: This paper poses no such risks.
    \item[] Guidelines:
    \begin{itemize}
        \item The answer NA means that the paper poses no such risks.
        \item Released models that have a high risk for misuse or dual-use should be released with necessary safeguards to allow for controlled use of the model, for example by requiring that users adhere to usage guidelines or restrictions to access the model or implementing safety filters. 
        \item Datasets that have been scraped from the Internet could pose safety risks. The authors should describe how they avoided releasing unsafe images.
        \item We recognize that providing effective safeguards is challenging, and many papers do not require this, but we encourage authors to take this into account and make a best faith effort.
    \end{itemize}

\item {\bf Licenses for existing assets}
    \item[] Question: Are the creators or original owners of assets (e.g., code, data, models), used in the paper, properly credited and are the license and terms of use explicitly mentioned and properly respected?
    \item[] Answer: \answerYes{} 
    \item[] Justification: Simulation tools such as MuJoCo are properly cited, and baseline methods are referenced both in the Experimental Section \ref{sec:experiments} and within the accompanying code.
    \item[] Guidelines:
    \begin{itemize}
        \item The answer NA means that the paper does not use existing assets.
        \item The authors should cite the original paper that produced the code package or dataset.
        \item The authors should state which version of the asset is used and, if possible, include a URL.
        \item The name of the license (e.g., CC-BY 4.0) should be included for each asset.
        \item For scraped data from a particular source (e.g., website), the copyright and terms of service of that source should be provided.
        \item If assets are released, the license, copyright information, and terms of use in the package should be provided. For popular datasets, \url{paperswithcode.com/datasets} has curated licenses for some datasets. Their licensing guide can help determine the license of a dataset.
        \item For existing datasets that are re-packaged, both the original license and the license of the derived asset (if it has changed) should be provided.
        \item If this information is not available online, the authors are encouraged to reach out to the asset's creators.
    \end{itemize}

\item {\bf New assets}
    \item[] Question: Are new assets introduced in the paper well documented and is the documentation provided alongside the assets?
    \item[] Answer: \answerYes{} 
    \item[] Justification: The code for running experiments will be released and is thoroughly documented. Simulation environments and associated assets will also be made publicly available.
    \item[] Guidelines:
    \begin{itemize}
        \item The answer NA means that the paper does not release new assets.
        \item Researchers should communicate the details of the dataset/code/model as part of their submissions via structured templates. This includes details about training, license, limitations, etc. 
        \item The paper should discuss whether and how consent was obtained from people whose asset is used.
        \item At submission time, remember to anonymize your assets (if applicable). You can either create an anonymized URL or include an anonymized zip file.
    \end{itemize}

\item {\bf Crowdsourcing and research with human subjects}
    \item[] Question: For crowdsourcing experiments and research with human subjects, does the paper include the full text of instructions given to participants and screenshots, if applicable, as well as details about compensation (if any)? 
    \item[] Answer: \answerNA{} 
    \item[] Justification: This work does not involve crowdsourcing or research involving human participants.
    \item[] Guidelines:
    \begin{itemize}
        \item The answer NA means that the paper does not involve crowdsourcing nor research with human subjects.
        \item Including this information in the supplemental material is fine, but if the main contribution of the paper involves human subjects, then as much detail as possible should be included in the main paper. 
        \item According to the NeurIPS Code of Ethics, workers involved in data collection, curation, or other labor should be paid at least the minimum wage in the country of the data collector. 
    \end{itemize}

\item {\bf Institutional review board (IRB) approvals or equivalent for research with human subjects}
    \item[] Question: Does the paper describe potential risks incurred by study participants, whether such risks were disclosed to the subjects, and whether Institutional Review Board (IRB) approvals (or an equivalent approval/review based on the requirements of your country or institution) were obtained?
    \item[] Answer: \answerNA{} 
    \item[] Justification: This work does not involve crowdsourcing or research involving human participants.
    \item[] Guidelines:
    \begin{itemize}
        \item The answer NA means that the paper does not involve crowdsourcing nor research with human subjects.
        \item Depending on the country in which research is conducted, IRB approval (or equivalent) may be required for any human subjects research. If you obtained IRB approval, you should clearly state this in the paper. 
        \item We recognize that the procedures for this may vary significantly between institutions and locations, and we expect authors to adhere to the NeurIPS Code of Ethics and the guidelines for their institution. 
        \item For initial submissions, do not include any information that would break anonymity (if applicable), such as the institution conducting the review.
    \end{itemize}

\item {\bf Declaration of LLM usage}
    \item[] Question: Does the paper describe the usage of LLMs if it is an important, original, or non-standard component of the core methods in this research? Note that if the LLM is used only for writing, editing, or formatting purposes and does not impact the core methodology, scientific rigorousness, or originality of the research, declaration is not required.
    \item[] Answer: \answerYes{} 
    \item[] Justification: was used solely for writing purposes. Its use did not influence the core methodology or experimental results.
    \item[] Guidelines:
    \begin{itemize}
        \item The answer NA means that the core method development in this research does not involve LLMs as any important, original, or non-standard components.
        \item Please refer to our LLM policy (\url{https://neurips.cc/Conferences/2025/LLM}) for what should or should not be described.
    \end{itemize}

\end{enumerate}

\newpage

\appendix
\section{Posterior Predictive Distribution}

Given an embedding \(\Tilde{z}_t\) in time $t$, the goal is to predict the probability over the state embedding for the next time-step \(p(\Tilde{x}_{t+1} | \Tilde{z}_t, \mathcal{D})\), where \(\mathcal{D}\) denotes the training dataset.

\begin{align}
    p(\Tilde{x}_{t+1} | \Tilde{z}_t, \mathcal{D}) &= \iint p(\Tilde{x}_{t+1}|\mathcal{K}, \Sigma, \Tilde{z}_t, \mathcal{D}) p(\mathcal{K} | \Sigma) p(\Sigma) \quad d\mathcal{K} d\Sigma \label{eq:posterior_predictive_formula}
\end{align}

First, we look at \(\int p(\Tilde{x}_{t+1}|\mathcal{K}, \Sigma, \Tilde{z}_t, \mathcal{D}) p(\mathcal{K} | \Sigma) d\mathcal{K}\).  We know that:

\begin{align}
     p(\mathcal{K} | \Sigma) &= \mathcal{MN}(\invbreve{M}, \Sigma, \invbreve{V}) \nonumber\\
    &= \frac{1}{(2\pi)^{\frac{\eta k}{2}}|\Sigma|^{\frac{k}{2}}|\invbreve{V}|^{\frac{\eta}{2}}} \textit{exp}\left[\frac{-1}{2}tr\left( \left(\mathcal{K} - \invbreve{M}\right)^\top \Sigma^{-1}      \left(\mathcal{K} - \invbreve{M}\right) \invbreve{V}^{-1} \right)\right] 
\end{align}

and,
\begin{align}
p(\Tilde{x}_{t+1}|\mathcal{K}, \Sigma, \Tilde{z}_t, \mathcal{D}) = \frac{1}{(2\pi)^{\frac{\eta}{2}}|\Sigma|^{\frac{1}{2}}} \textit{exp}\left[\frac{-1}{2}(\Tilde{x}_{t+1} - \mathcal{K}\Tilde{z}_t)^\top \Sigma^{-1} (\Tilde{x}_{t+1} - \mathcal{K}\Tilde{z}_t) \right] 
\end{align}

Then:
\begin{align}
    &\int p(\Tilde{x}_{t+1}|\mathcal{K}, \Sigma, \Tilde{z}_t, \mathcal{D}) p(\mathcal{K} | \Sigma) d\mathcal{K}  \nonumber\\
    = &\int \frac{1}{(2\pi)^{\frac{\eta + \eta k}{2}} |\Sigma|^{\frac{1 + k}{2}} |\invbreve{V}|^{\frac{\eta}{2}}} \textit{exp}\left[ \frac{-1}{2} tr\left(\Sigma^{-1} \left((\Tilde{x}_{t+1} - \mathcal{K}\Tilde{z}_t) (\Tilde{x}_{t+1} - \mathcal{K}\Tilde{z}_t)^\top +   
    \left(\mathcal{K} - \invbreve{M}\right) \invbreve{V}^{-1} \left(\mathcal{K} - \invbreve{M}\right)^\top
    \right)\right) \right] \nonumber\\
    \begin{split}
            = &\int \frac{1}{(2\pi)^{\frac{\eta + \eta k}{2}} |\Sigma|^{\frac{1 + k}{2}} |\invbreve{V}|^{\frac{\eta}{2}}} \textit{exp}
    \Bigg[ \frac{-1}{2} tr
        \bigg(\Sigma^{-1} 
            \bigg(\mathcal{K} \left(\Tilde{z}_t \Tilde{z}_t^\top + \invbreve{V}^{-1} \right)\mathcal{K}^\top -  2\left(\Tilde{x}_{t+1}\Tilde{z}_t^\top + \invbreve{M}  \invbreve{V}^{-1} \right)\mathcal{K}^\top  \\ & \qquad\qquad\qquad\qquad\qquad\qquad\qquad\qquad\qquad\quad+ \invbreve{M} \invbreve{V}^{-1} \invbreve{M}^\top + \Tilde{x}_{t+1}(\Tilde{x}_{t+1})^\top\bigg)
        \bigg)
    \Bigg] 
    \end{split} \nonumber\\
    = &\int \frac{1}{(2\pi)^{\frac{\eta + \eta k}{2}} |\Sigma|^{\frac{1 + k}{2}} |\invbreve{V}|^{\frac{\eta}{2}}} \textit{exp}
    \Bigg[ \frac{-1}{2} tr
        \bigg(\Sigma^{-1} 
            \bigg(\mathcal{K} S_{aa}\mathcal{K}^\top - 2 S_{ab} \mathcal{K}^\top  + S_{bb}\bigg)\bigg)\Bigg] \nonumber \\
    = &\int \frac{1}{(2\pi)^{\frac{\eta + \eta k}{2}} |\Sigma|^{\frac{1 + k}{2}} |\invbreve{V}|^{\frac{\eta}{2}}} \textit{exp}
    \Bigg[ \frac{-1}{2} tr
        \bigg(\Sigma^{-1} 
            \bigg(\left(\mathcal{K} -  S_{ab}S_{aa}^{-1}\right)S_{aa}\left(\mathcal{K} -  S_{ab}S_{aa}^{-1}\right)^\top + S_{a|b}  \bigg)\bigg)\Bigg] \nonumber \\
    = &\quad\frac{(2\pi)^{\frac{k\eta}{2}} |\Sigma|^{\frac{k}{2}} |S_{aa}|^{-\frac{\eta}{2}}}{(2\pi)^{\frac{\eta + \eta k}{2}} |\Sigma|^{\frac{1 + k}{2}} |\invbreve{V}|^{\frac{\eta}{2}}} \textit{exp} \left[\frac{-1}{2} tr\left(\Sigma^{-1} S_{a|b} \right)\right] = \frac{|S_{aa}|^{-\frac{\eta}{2}}}{(2\pi)^{\frac{\eta}{2}} |\Sigma|^{\frac{1}{2}}|\invbreve{V}|^{\frac{\eta}{2}}} \textit{exp} \left[\frac{-1}{2} tr\left(\Sigma^{-1} S_{a|b} \right)\right] \label{eq:posterior_predictive_part1}
\end{align}

where: 

\begin{align*}
    S_{aa} &= \Tilde{z}_t \Tilde{z}_t^\top + \invbreve{V}^{-1} \\
    S_{ab} &= \Tilde{x}_{t+1}\Tilde{z}_t^\top + \invbreve{M}  \invbreve{V}^{-1} \\
    S_{bb} &= \invbreve{M} \invbreve{V}^{-1} \invbreve{M}^\top + \Tilde{x}_{t+1}(\Tilde{x}_{t+1})^\top \\
    S_{a|b} &= S_{bb} - S_{ab}S_{aa}^{-1}S_{ab}^\top
\end{align*}

\break
Substituting Eq. \ref{eq:posterior_predictive_part1} into Eq. \ref{eq:posterior_predictive_formula} yields:

\begin{align}
    p(\Tilde{x}_{t+1} | \Tilde{z}_t, \mathcal{D}) &= \int \frac{|S_{aa}|^{-\frac{\eta}{2}} |\invbreve{\Psi}|^{\frac{\invbreve{\nu}}{2}}}{(2\pi)^{\frac{\eta}{2}} |\Sigma|^{\frac{1}{2}}|\invbreve{V}|^{\frac{\eta}{2}} 2^{\frac{\invbreve{\nu}\eta}{2}} \Gamma_\eta\left(\frac{\invbreve{\nu}}{2}\right)} |\Sigma|^{-\frac{\invbreve{\nu}+\eta+1}{2}} \textit{exp} \left[\frac{-1}{2} tr \left(\Sigma^{-1} \left(S_{a|b} + \invbreve{\Psi}\right)\right)\right] \qquad d\Sigma \nonumber \\
    &= \frac{|S_{aa}|^{-\frac{\eta}{2}} |\invbreve{\Psi}|^{\frac{\invbreve{\nu}}{2}}}{(2\pi)^{\frac{\eta}{2}}|\invbreve{V}|^{\frac{\eta}{2}} 2^{\frac{\invbreve{\nu}\eta}{2}} \Gamma_\eta\left(\frac{\invbreve{\nu}}{2}\right)} \int |\Sigma|^{-\frac{\invbreve{\nu}+\eta+1+1}{2}} \textit{exp} \left[\frac{-1}{2} tr \left(\Sigma^{-1} \left(S_{a|b} + \invbreve{\Psi}\right)\right)\right] \qquad d\Sigma \nonumber \\
    &= \frac{|S_{aa}|^{-\frac{\eta}{2}} |\invbreve{\Psi}|^{\frac{\invbreve{\nu}}{2}}}{(2\pi)^{\frac{\eta}{2}}|\invbreve{V}|^{\frac{\eta}{2}} 2^{\frac{\invbreve{\nu}\eta}{2}} \Gamma_\eta\left(\frac{\invbreve{\nu}}{2}\right)} \frac{2^{\frac{\invbreve{\nu}\eta}{2}} \Gamma_\eta\left(\frac{\invbreve{\nu} + \eta}{2}\right)}{|S_{a|b} + \invbreve{\Psi}|^{\frac{\invbreve{\nu}+\eta}{2}}} \nonumber \\
    &= \frac{\Gamma_\eta\left(\frac{\invbreve{\nu} + \eta}{2}\right) |S_{aa}|^{-\frac{\eta}{2}} |\invbreve{\Psi}^{-\frac{\eta}{2}}|}{(2\pi)^{\frac{\eta}{2}}\Gamma_\eta\left(\frac{\invbreve{\nu}}{2}\right)|\invbreve{V}|^{\frac{\eta}{2}} } |I + \invbreve{\Psi}^{-1}S_{a|b}|^{-\frac{\invbreve{\nu}+\eta}{2}} \label{eq:pre_matrix_t_distribution}
\end{align}

Now, let's break \(S_{a|b}\) first. We know that:

\begin{align}
    S_{a|b} &= S_{bb} - S_{ab}S_{aa}^{-1}S_{ab}^\top \nonumber \\
    &= \invbreve{M} \invbreve{V}^{-1} \invbreve{M}^\top + \Tilde{x}_{t+1}(\Tilde{x}_{t+1})^\top - \left(\Tilde{x}_{t+1}\Tilde{z}_t^\top + \invbreve{M}  \invbreve{V}^{-1}\right) \underbrace{\left(\Tilde{z}_t \Tilde{z}_t^\top + \invbreve{V}^{-1}\right)^{-1}}_{C}  \left(\Tilde{x}_{t+1}\Tilde{z}_t^\top + \invbreve{M}  \invbreve{V}^{-1}\right)^\top \nonumber \\
    &= \invbreve{M} \invbreve{V}^{-1} \invbreve{M}^\top + \Tilde{x}_{t+1}(\Tilde{x}_{t+1})^\top - \Tilde{x}_{t+1}\Tilde{z}_t^\top C \Tilde{z}_t(\Tilde{x}_{t+1})^\top + 
    \Tilde{x}_{t+1}\Tilde{z}_t^\top C \invbreve{V}^{-1} \invbreve{M}^\top + \invbreve{M} \invbreve{V}^{-1} C \Tilde{z}_t(\Tilde{x}_{t+1})^\top + \invbreve{M} \invbreve{V}^{-1} C \invbreve{V}^{-1} \invbreve{M}^\top \nonumber \\
    &= \Tilde{x}_{t+1} \underbrace{\left(I - \Tilde{z}_t^\top C \Tilde{z}_t\right)}_{S_{ii}} \left(\Tilde{x}_{t+1}\right)^\top - 2 \underbrace{\invbreve{M} \invbreve{V}^{-1} C \Tilde{z}_t}_{S_{ij}}(\Tilde{x}_{t+1})^\top + \underbrace{\invbreve{M} \invbreve{V}^{-1} \invbreve{M}^\top + \invbreve{M} \invbreve{V}^{-1} C \invbreve{V}^{-1} \invbreve{M}^\top}_{S_{jj}} \nonumber \\
    =& \left(\Tilde{x}_{t+1} - S_{ij} S_{ii}^{-1}\right)^\top S_{ii} \left(\Tilde{x}_{t+1} - S_{ij} S_{ii}^{-1}\right)  + \underbrace{S_{j|i}}_{= 0}\label{eq:S_ab}
\end{align}

Now, we have reached a quadratic formula. We begin by using the Woodbury formula on \(S_{ii}^{-1}\): 

\begin{align}
    S_{ii}^{-1} &= \left(I - \Tilde{z}_t^\top C \Tilde{z}_t\right)^{-1} \nonumber \\
    &=I + \Tilde{z}_t^\top \left( C^{-1} - \Tilde{z}_t \Tilde{z}_t^\top\right)^{-1}\Tilde{z}_t \nonumber \\
    &= I + \Tilde{z}_t^\top \left( \Tilde{z}_t \Tilde{z}_t^\top + \invbreve{V}^{-1} - \Tilde{z}_t \Tilde{z}_t^\top\right)^{-1}\Tilde{z}_t \nonumber \\
    &= 1 + \Tilde{z}_t^\top \invbreve{V} \Tilde{z}_t
\end{align}

Then, 

\begin{align}
    S_{ij} S_{ii}^{-1} &= \invbreve{M} \invbreve{V}^{-1} C \Tilde{z}_t \left( 1 + \Tilde{z}_t^\top \invbreve{V} \Tilde{z}_t \right) \nonumber \\
    &= \invbreve{M} \invbreve{V}^{-1} \left(\invbreve{V} - \invbreve{V} \Tilde{z}_t \left( 1 + \Tilde{z}_t^\top \invbreve{V}\Tilde{z}_t\right)^{-1} \Tilde{z}_t^\top\invbreve{V} \right) \Tilde{z}_t \left(1 + \Tilde{z}_t^\top \invbreve{V} \Tilde{z}_t\right) \nonumber \\
    &= \invbreve{M} \left(I - \Tilde{z}_t \left( 1 + \Tilde{z}_t^\top \invbreve{V}\Tilde{z}_t\right)^{-1} \Tilde{z}_t^\top\invbreve{V} \right) \Tilde{z}_t \left(1 + \Tilde{z}_t^\top \invbreve{V} \Tilde{z}_t\right) \nonumber \\
    &= \invbreve{M} \left( \Tilde{z}_t \left(1 + \Tilde{z}_t^\top \invbreve{V} \Tilde{z}_t\right) - \Tilde{z}_t \left( 1 + \Tilde{z}_t^\top \invbreve{V}\Tilde{z}_t\right)^{-1} \Tilde{z}_t^\top\invbreve{V}  \Tilde{z}_t \left(1 + \Tilde{z}_t^\top \invbreve{V} \Tilde{z}_t\right)\right) \nonumber \\
    &= \invbreve{M} \left( \Tilde{z}_t \left(1 + \Tilde{z}_t^\top \invbreve{V} \Tilde{z}_t\right) - \Tilde{z}_t \Tilde{z}_t^\top\invbreve{V}  \Tilde{z}_t \left( 1 + \Tilde{z}_t^\top \invbreve{V}\Tilde{z}_t\right)^{-1} \left(1 + \Tilde{z}_t^\top \invbreve{V} \Tilde{z}_t\right)\right) \nonumber \\
    &= \invbreve{M} \Tilde{z}_t \label{eq:posterior_predictive_mean}
\end{align}

Substituting \ref{eq:posterior_predictive_mean} and \ref{eq:S_ab} into \ref{eq:pre_matrix_t_distribution} yields:
\begin{align}
    p(\Tilde{x}_{t+1} | \Tilde{z}_t, \mathcal{D}) &= \frac{\Gamma_\eta\left(\frac{\invbreve{\nu} + \eta}{2}\right) |S_{aa}|^{-\frac{\eta}{2}} |\invbreve{\Psi}|^{-\frac{\eta}{2}}}{(2\pi)^{\frac{\eta}{2}}\Gamma_\eta\left(\frac{\invbreve{\nu}}{2}\right)|\invbreve{V}|^{\frac{\eta}{2}} } |I + \invbreve{\Psi}^{-1} \left(\Tilde{x}_{t+1} - \invbreve{M} \Tilde{z}_t\right)^\top \left(1 + \Tilde{z}_t^\top \invbreve{V} \Tilde{z}_t\right)^{-1} \left(\Tilde{x}_{t+1} - \invbreve{M} \Tilde{z}_t\right) |^{-\frac{\invbreve{\nu}+\eta}{2}} \nonumber
\end{align}

Using the matrix determinant lemma, this equals to:

\begin{align} \label{eq:studentt}
        p(\Tilde{x}_{t+1} | \Tilde{z}_t, \mathcal{D}) &= \frac{\Gamma_\eta\left(\frac{\invbreve{\nu} + \eta}{2}\right)}{(2\pi)^{\frac{\eta}{2}}\Gamma_\eta\left(\frac{\invbreve{\nu}}{2}\right)} \left|\frac{\invbreve{\Psi}}{1 + \Tilde{z}_t^\top \invbreve{V} \Tilde{z}_t}\right|^{-\frac{\eta}{2}} \left|I + \left(\Tilde{x}_{t+1} - \invbreve{M} \Tilde{z}_t\right)^\top\frac{\invbreve{\Psi}^{-1}}{1 + \Tilde{z}_t^\top \invbreve{V} \Tilde{z}_t}\left(\Tilde{x}_{t+1} - \invbreve{M} \Tilde{z}_t\right) \right|^{-\frac{\invbreve{\nu}+\eta}{2}} \nonumber \\
        &= \mathcal{T}_{\invbreve{\nu}} \left(\invbreve{M} \Tilde{z}_t, \Psi \left(1 + \Tilde{z}_t^\top \invbreve{V} \Tilde{z}_t\right)\right)
\end{align}

Which is a multivariate Student t-distribution with mean \(\invbreve{M} \Tilde{z}_t\) and scale matrix \(\Psi \left(1 + \Tilde{z}_t^\top \invbreve{V} \Tilde{z}_t\right)\).

Under large number of degrees of freedom (large number of training examples in our case), this distributions converges to a multivariate gaussian distribution with mean \(\invbreve{M} \Tilde{z}_t\) and a covariance \(\Psi \left(1 + \Tilde{z}_t^\top \invbreve{V} \Tilde{z}_t\right)\).

\paragraph{On the Gaussian Approximation.}
The posterior predictive distribution in Eq.~\ref{eq:studentt} is a multivariate Student-$t$ with degrees of freedom $\nu' = \nu_0 + N$, where $\nu_0$ is the prior's degrees of freedom and $N$ is the number of test-time adaptation points. It is well known that the Student-$t$ closely approximates a Gaussian when $\nu' > 30$. In our framework, this condition is reliably satisfied: $\nu_0$ is meta-learned and chosen to exceed the latent dimension $\eta$, which is typically at least 32, and $N$ adds further support. As a result, $\nu'$ is well above the threshold in all cases we consider, justifying the Gaussian approximation used in predictive forecasting.

\paragraph{Multi-step predictive via moment matching.}
We now extend the single-step posterior predictive in Eq.~\eqref{eq:studentt} to a multi-step forecasting rule.
As shown in Eq.~\eqref{eq:studentt}, the \emph{conditional} one-step predictive is a multivariate Student-$t$ with mean $\invbreve{M}\,\Tilde z_t$ and state-dependent scale
$\invbreve{\Psi}\big(1+\Tilde z_t^\top \invbreve V \Tilde z_t\big)$.
For large degrees of freedom $\invbreve{\nu}$, we adopt the standard Gaussian surrogate
\begin{equation}
\label{eq:gauss-surrogate}
\Tilde x_{t+1}\,\big|\,\Tilde z_t \;\approx\; \mathcal N\!\Big(\invbreve{M}\,\Tilde z_t,\ \invbreve{\Psi}\big(1+\Tilde z_t^\top \invbreve V \Tilde z_t\big)\Big),
\end{equation}
which preserves the exact first two moments of the Student-$t$.
Although this marginal is generally non-Gaussian, its mean and covariance admit closed forms.
We therefore follow a principled \emph{moment matching} strategy: at each horizon step we compute the exact mean and covariance of the marginal induced by~\eqref{eq:gauss-surrogate}, and we re-approximate that marginal by the Gaussian with those moments.%

\vspace{4pt}
\noindent\textbf{Regressor and state moments.}
Let $\mu_{z,t} \!\coloneqq \mathbb E[\Tilde z_t]$, $\Sigma_{z,t}\!\coloneqq \mathrm{Cov}(\Tilde z_t)$ denote the regressor moments at time $t$, and
$\mu_t\!\coloneqq \mathbb E[\Tilde x_t]$, $\Sigma_t\!\coloneqq \mathrm{Cov}(\Tilde x_t)$ the state moments.
We use the quadratic-form identity
\begin{equation}
\label{eq:qf-identity}
\mathbb E\!\big[\Tilde z_t^\top \invbreve V\,\Tilde z_t\big]
=\mu_{z,t}^\top \invbreve V\,\mu_{z,t} + \mathrm{tr}\!\big(\invbreve V\,\Sigma_{z,t}\big).
\end{equation}

\paragraph{Recursive multi-step moment matching.}
To forecast $h$ steps ahead, we iterate the one-step calculation while \emph{recomputing} the regressor moments from the current state moments at each step. Under the Gaussian surrogate~\eqref{eq:gauss-surrogate}, the \emph{marginal} recursive moment-matched mean and covariance can be given as follows:
\begin{align}
\mu_{t+1} &= \mathbb E[\Tilde x_{t+1}] \;=\; \invbreve{M}\,\mu_{z,t}, 
\label{eq:mm-mean-1}\\
\Sigma_{t+1} &= \mathrm{Cov}(\Tilde x_{t+1})
\;=\; \invbreve{M}\,\Sigma_{z,t}\,\invbreve{M}^{\!\top}
\;+\; \invbreve{\Psi}\,\Big(1 + \mu_{z,t}^\top \invbreve V \mu_{z,t} + \mathrm{tr}(\invbreve V \Sigma_{z,t})\Big).
\label{eq:mm-cov-1}
\end{align}

\begin{proof}
By the tower property,
$\mu_{t+1}=\mathbb E\!\big[\mathbb E(\Tilde x_{t+1}\mid \Tilde z_t)\big]=\mathbb E[\invbreve{M}\Tilde z_t]=\invbreve{M}\mu_{z,t}$.

For the covariance, apply the law of total variance:
\[
\Sigma_{t+1}
=\mathrm{Cov}\!\big(\mathbb E[\Tilde x_{t+1}\mid \Tilde z_t]\big)
+\mathbb E\!\big[\mathrm{Cov}(\Tilde x_{t+1}\mid \Tilde z_t)\big]
=\invbreve{M}\Sigma_{z,t}\invbreve{M}^{\!\top}
+\invbreve{\Psi}\,\mathbb E\!\big[1+\Tilde z_t^\top \invbreve V \Tilde z_t\big],
\]
using \eqref{eq:qf-identity}, this becomes:

\[
\Sigma_{t+i+1} = \invbreve{M}\,\Sigma_{z,t+i}\,\invbreve{M}^{\!\top}
\;+\; \invbreve{\Psi}\,\Big(1 + \mu_{z,t+i}^\top \invbreve V \mu_{z,t+i} + \mathrm{tr}(\invbreve V \Sigma_{z,t+i})\Big),
\]
\end{proof}

\subsection{On Modeling and Interpreting Uncertainty}

The posterior predictive distribution derived is a approximated with a multivariate gaussian distribution, with a scale matrix
\[
\underbrace{\invbreve{M}\,\Sigma_{z,t}\,\invbreve{M}^{\!\top}}_{\text{propagation of state uncertainty}}
\;+\;
\underbrace{\invbreve{\Psi}}_{\text{aleatoric}}
\times
\underbrace{\Big(1+\mu_{z,t}^\top \invbreve{V}\,\mu_{z,t}+\mathrm{tr}(\invbreve{V}\,\Sigma_{z,t})\Big)}_{\text{epistemic modulation via }\invbreve{V}},
\]
which naturally decomposes predictive uncertainty into two components. The matrix $\Psi$ represents \emph{aleatoric uncertainty}, capturing irreducible system noise—such as sensor inaccuracies and actuation imprecision—that persists across test-time conditions. The multiplicative term $\mu_{z,t+i}^\top \invbreve V \mu_{z,t+i} + \mathrm{tr}(\invbreve V \Sigma_{z,t+i})$ reflects \emph{epistemic uncertainty}, expressing the model's confidence in the latent Koopman dynamics based on limited recent data. This formulation ensures that predictive variance increases in less-familiar regions of the latent space, and contracts where the model has strong task-specific evidence.

In our framework, uncertainty is modeled explicitly over the Koopman operator, while the encoder remains deterministic. This modeling choice is guided by the observation that, in many practical settings—particularly those involving distribution shift—\emph{model uncertainty tends to dominate}. When test-time dynamics deviate from training-time conditions (e.g., due to changing surface friction or payload configurations), it is primarily the uncertainty in the evolving system behavior—not the representation—that governs prediction quality. This is a reasonable assumption in structured systems where the encoder has been trained across a broad and diverse set of environments. In such settings, latent representations tend to be stable and informative, and the uncertainty in the learned dynamics is the principal source of variability affecting performance.

By capturing epistemic uncertainty over $K$ and leveraging the conjugate structure of the Matrix Normal-Inverse Wishart prior, we enable closed-form Bayesian updates that are both data-efficient and computationally lightweight. This design supports fast, adaptive inference suitable for real-time applications such as motion planning and control, where latency is critical. While incorporating representation-level uncertainty is a promising extension, our empirical results suggest that the current formulation provides strong predictive accuracy and reliable uncertainty estimates under challenging distribution shifts.

\newpage

\section{Truck and Trailer Data Collection} \label{sec:data_collection}

\begin{wrapfigure}{r}{0.45\textwidth}
  \centering
  \includegraphics[width=.43\textwidth]{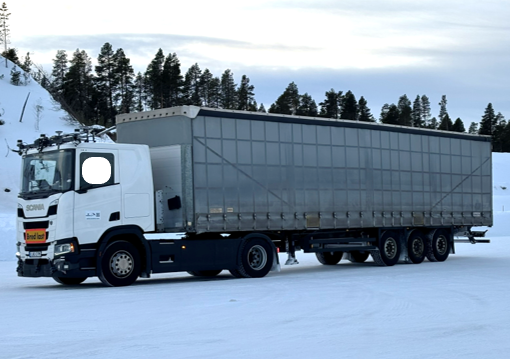}
  \caption{Truck and semi-trailer autonomous vehicle used for testing.}
  \label{fig:truck2}
\end{wrapfigure}
One of the key contributions of this work is the application of motion planning techniques for autonomous truck and trailer systems. Autonomous driving datasets are typically expensive to acquire and maintain, with access often restricted to select (OEMs) and their suppliers. This study involved extensive data collection from a real autonomous truck and trailer platform to develop test datasets that accurately reflect the complex dynamics of these systems under diverse and challenging conditions, including harsh winter environments. Data was gathered over 12 months (Jan 2024 – Jan 2025) using a 37.5-ton, 17-meter-long Scania autonomous tractor-semitrailer in various driving and weather scenarios, covering all seasonal variations (see Fig. \ref{fig:truck2}). Each session was supervised by a safety driver and test engineer to ensure safety and system reliability.

The dataset includes comprehensive autonomy-related logs from various high-fidelity sensors.
Special emphasis was placed on capturing edge cases and distribution shifts, particularly under challenging winter conditions. 
Routine tests included maneuvers on a high-speed test track and test drives on public highways in different weather conditions including sun, snow and rain.  Driving scenarios included straight line driving, negotiating curves and slopes, lane changes, cut-ins, stopping, following other actors and highway driving. 

Data collection was conducted either through fully autonomous operation or, for higher-risk maneuvers, via manual driving with onboard sensors and logging systems enabled.

\begin{wrapfigure}{r}{0.45\textwidth}
  \centering
  \includegraphics[width=0.43\textwidth]{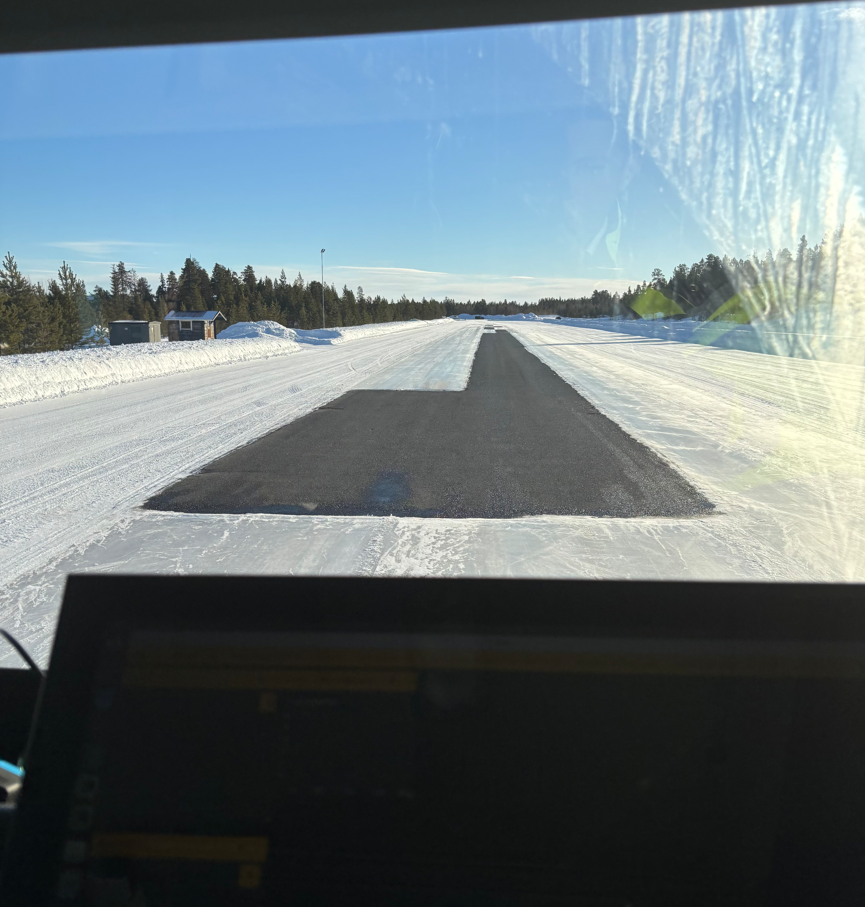}
  \caption{Mu-split test scenario captured from within the vehicle cab, illustrating asymmetric surface conditions—one side of the vehicle traveling on high-friction asphalt while the other traverses low-friction polished ice.}
  \label{fig:mu_split}
\end{wrapfigure}

To evaluate autonomous driving performance under extreme winter conditions, a series of targeted field tests were conducted during February and March 2025 on specialized proving grounds located in a northern climate. These controlled environments included a variety of challenging low-traction surfaces such as wet asphalt, compacted snow, and polished ice. The autonomous system was evaluated across a diverse suite of scenarios specifically designed to challenge its planning and responsiveness under adverse conditions. These included high-speed cornering on snow-covered roads, dynamic adaptation to sudden ice patches, sharp emergency braking, and evasive maneuvers to avoid virtual obstacles. These trajectories were incorporated into the test dataset for evaluating dynamic modeling performance.

A particularly critical set of experiments was performed on a mu-split track (see Fig. \ref{fig:mu_split}), where one side of the vehicle experienced high-friction dry asphalt while the opposite side traversed low-friction ice. This asymmetric traction environment is known to induce significant yaw moments, increasing the risk of vehicle instability, trailer swing, and potential jackknifing. These mu-split tests constituted a critical stress evaluation of the model’s ability to generalize under distributional shift, revealing key aspects of its stability and robustness. They also played a central role in validating the planner’s strategies for maintaining safe and dynamically feasible behavior across varying surface conditions and friction transitions.

For data processing, raw trajectory data were preprocessed to enhance signal fidelity and facilitate effective model learning. A fourth-order Butterworth low-pass filter with a 5 Hz cutoff frequency was applied to suppress high-frequency noise in the state signals. To ensure balanced feature scaling across the neural network, all input features were normalized to the range [–1, 1]. Each system state was encoded as a 7-dimensional vector comprising longitudinal and lateral velocities and accelerations, the tractor’s yaw angle, the articulation angle between the tractor and trailer, and the front-wheel slip angle. The control input vector included throttle, brake, and steering commands.
The resulting dataset contains over 10 hours of curated trajectory data. To promote diversity in dynamic behaviors, straight-line highway driving segments—which were overrepresented in raw data—were limited to 40\% of the training set. This ensured sufficient representation of more complex scenarios, including turns, braking events, and evasive maneuvers.

\newpage

\section{Additional Results}
\paragraph{Base Simulation Environment.} The environments used in our experiments encompass a broad spectrum of dynamical behaviors and distribution shifts. As shown in Fig.~\ref{fig:environments}, they include both robotic manipulation and locomotion tasks—ranging from the Panda-Lift setup in the RoboSuite environment to established continuous control benchmarks such as Ant, HalfCheetah, Hopper, and Walker.

\begin{figure}[htbp]
    \centering
    \begin{subfigure}[b]{0.19\textwidth}
        \centering
        \includegraphics[width=\textwidth,height=0.11\textheight]{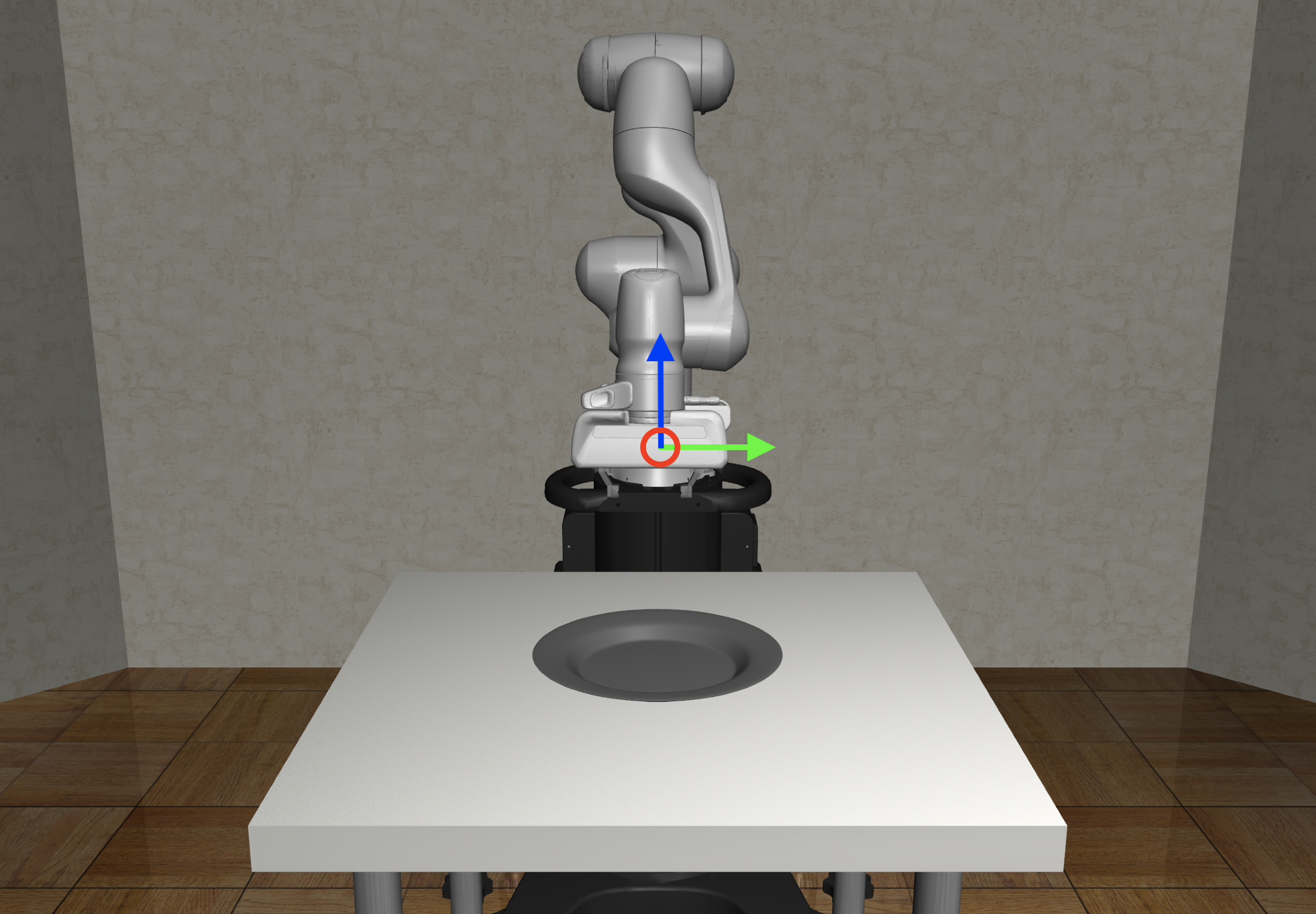}
        \caption{Panda-Lift}
        \label{fig:panda}
    \end{subfigure}
    \hfill
    \begin{subfigure}[b]{0.19\textwidth}
        \centering
        \includegraphics[width=\textwidth,height=0.11\textheight]{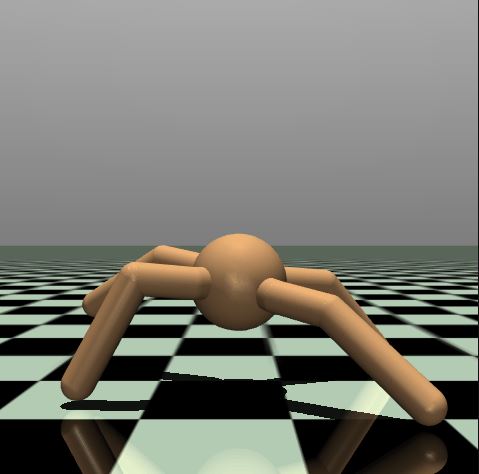}
        \caption{Ant}
        \label{fig:ant}
    \end{subfigure}
    \hfill
    \begin{subfigure}[b]{0.19\textwidth}
        \centering
        \includegraphics[width=\textwidth,height=0.11\textheight]{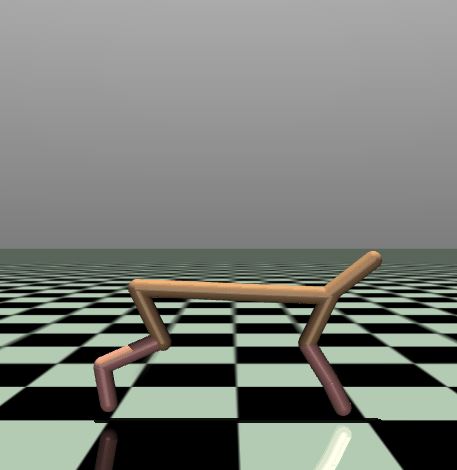}
        \caption{HalfCheetah}
        \label{fig:halfcheetah}
    \end{subfigure}
    \hfill
    \begin{subfigure}[b]{0.19\textwidth}
        \centering
        \includegraphics[width=\textwidth,height=0.11\textheight]{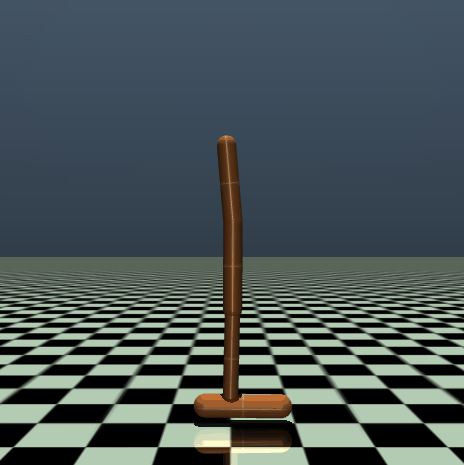}
        \caption{Hopper}
        \label{fig:hopper}
    \end{subfigure}
    \hfill
    \begin{subfigure}[b]{0.19\textwidth}
        \centering
        \includegraphics[width=\textwidth,height=0.11\textheight]{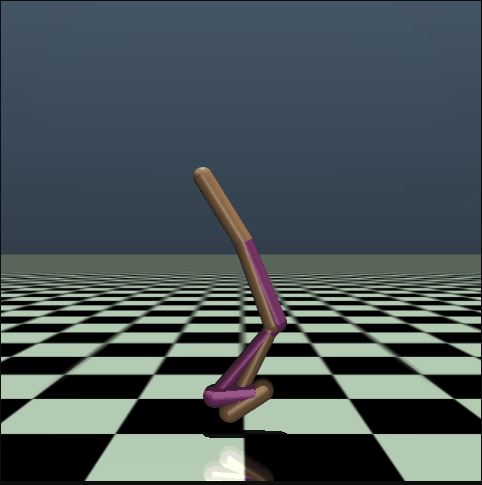}
        \caption{Walker}
        \label{fig:walker}
    \end{subfigure}
    \caption{Environments: Panda-Lift, Ant, HalfCheetah, Hopper, and Walker.}
    \label{fig:environments}
\end{figure}

\subsection{Predictive Performance}
\label{sec:pred_perf_extended}

We extend our evaluation to four additional environments—Ant, HalfCheetah-Stationary, Panda, and Walker—to further test \textsc{MetaKoopman}'s versatility. As shown in Table~\ref{tab:additional_val_loss}, the model consistently outperforms strong baselines across varied conditions.

Across all environments, \textsc{MetaKoopman} achieves the lowest validation loss as illustrated in Figure~\ref{fig:val_curves_extended}. In the \textbf{Ant} and \textbf{Walker} tasks, it rapidly adapts to changing dynamics and maintains stable performance under noise and contact variability. In the \textbf{HalfCheetah-Stationary} environment, it preserves robustness without overfitting in the absence of distribution shift. Finally, in the \textbf{Panda-Damping} manipulation setting, \textsc{MetaKoopman} attains the best overall accuracy with minimal variance, highlighting its scalability and generalization across locomotion and manipulation domains.

\begin{figure}[h]
    \centering
    \begin{subfigure}[b]{0.42\textwidth}
        \centering
        \includegraphics[width=\textwidth]{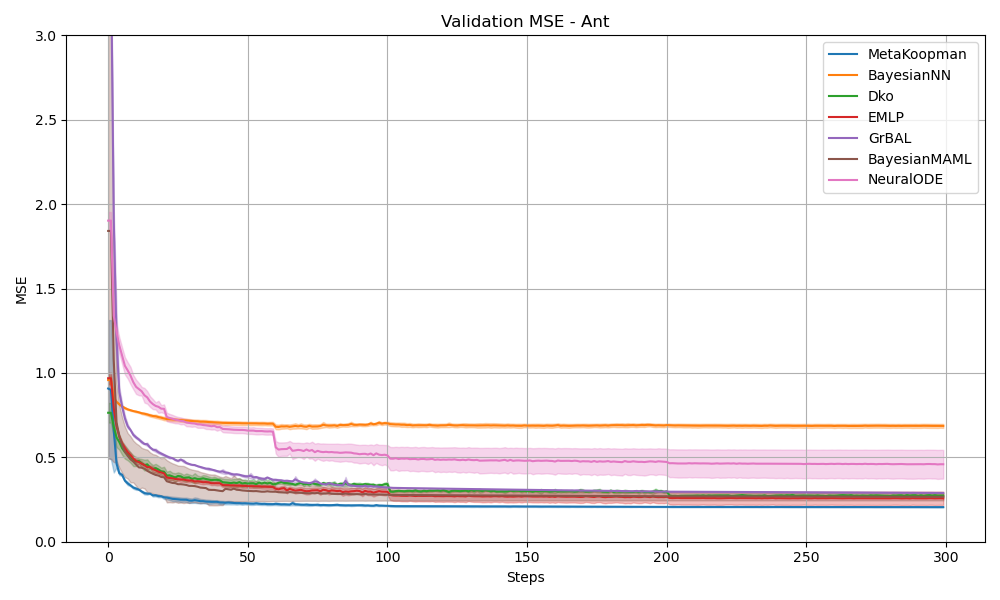}
        \caption{Ant-DisabledJoints}
        \label{fig:ant_mse}
    \end{subfigure}
    \begin{subfigure}[b]{0.42\textwidth}
        \centering
        \includegraphics[width=\textwidth]{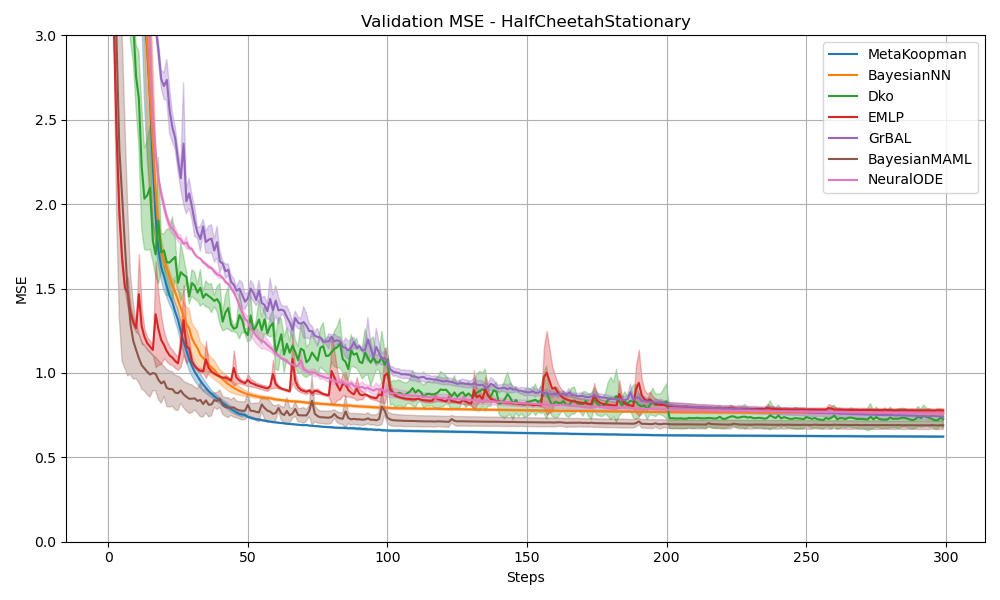}
        \caption{HalfCheetah-Stationary}
        \label{fig:cheetah_mse}
    \end{subfigure}

    \begin{subfigure}[b]{0.42\textwidth}
        \centering
        \includegraphics[width=\textwidth]{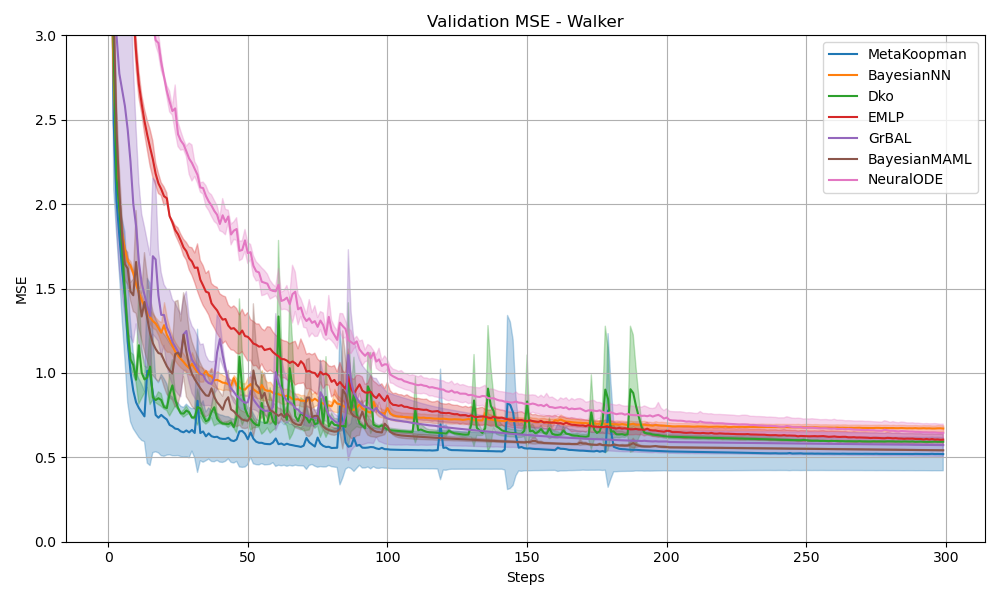}
        \caption{Walker-Friction}
        \label{fig:walker_mse}
    \end{subfigure}
    \begin{subfigure}[b]{0.42\textwidth}
        \centering
        \includegraphics[width=\textwidth]{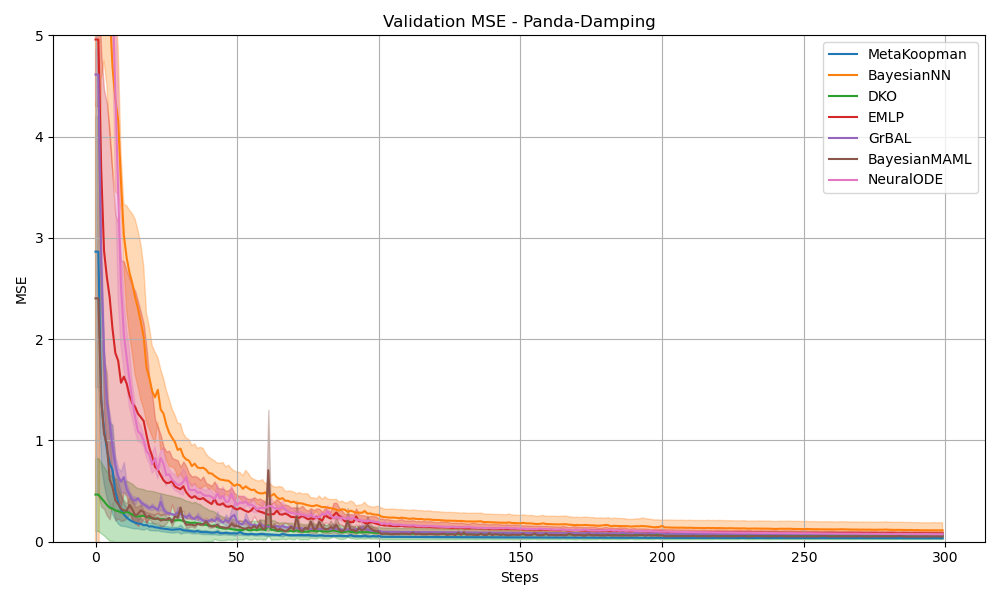}
        \caption{Panda-Damping}
        \label{fig:panda_mse}
    \end{subfigure}

    \caption{Validation MSE over time for (a) Ant, (b) HalfCheetah-Stationary, (c) Walker, and (d) Panda-Damping. Shaded regions denote standard deviation across three runs.}
    \label{fig:val_curves_extended}
\end{figure}

\begin{table}[h]
\centering
\caption{Final validation loss ($\pm$ std) across additional environments. Best performance per environment is \textbf{bolded}.}
\label{tab:additional_val_loss}
\resizebox{\textwidth}{!}{
\begin{tabular}{lccccccc}
\toprule
\textbf{Environment} & \textbf{MetaKoopman} & \textbf{BayesianMAML} & \textbf{GrBAL} & \textbf{DKO} & \textbf{BayesianNN} & \textbf{EMLP} & \textbf{NeuralODE} \\
\midrule
\textbf{Ant-DisabledJoints} & \textbf{0.21 $\pm$ 0.00} & 0.27 $\pm$ 0.02 & 0.29 $\pm$ 0.00 & 0.27 $\pm$ 0.00 & 0.69 $\pm$ 0.01 & 0.26 $\pm$ 0.04 & 0.46 $\pm$ 0.09 \\
\textbf{HalfCheetah-Stationary} & \textbf{0.62 $\pm$ 0.00} & 0.69 $\pm$ 0.02 & 0.74 $\pm$ 0.02 & 0.73 $\pm$ 0.05 & 0.76 $\pm$ 0.00 & 0.78 $\pm$ 0.01 & 0.76 $\pm$ 0.02 \\
\textbf{Walker-Friction} & \textbf{0.52 $\pm$ 0.10} & 0.54 $\pm$ 0.00 & 0.57 $\pm$ 0.07 & 0.59 $\pm$ 0.00 & 0.67 $\pm$ 0.02 & 0.61 $\pm$ 0.03 & 0.64 $\pm$ 0.05 \\
\textbf{Panda-Damping} & \textbf{0.035 $\pm$ 0.002} & 0.05 $\pm$ 0.0002 & 0.07 $\pm$ 0.002 & 0.06 $\pm$ 0.003 & 0.11 $\pm$ 0.004 & 0.08 $\pm$ 0.02 & 0.07 $\pm$ 0.007 \\
\bottomrule
\end{tabular}
}
\end{table}

\subsection{Uncertainty Quantification.} 

We evaluate the quality of predictive uncertainty by measuring its correlation with actual prediction errors. A strong positive correlation indicates that the model expresses higher uncertainty when its predictions are less accurate—a desirable property for downstream safety-critical applications. 

To assess this, we compute the Pearson correlation coefficient between the predicted uncertainty and mean squared error (MSE) across 100{,}000 randomly selected states spanning diverse trajectories and time steps. As shown in Table~\ref{tbl:corr}, \textsc{MetaKoopman}consistently exhibits the highest correlation values across all datasets, indicating well-calibrated uncertainty estimates. 

\begin{table}[htbp]
    \centering
    \caption{Correlation between predicted uncertainty and squared error across datasets. Higher values indicate better uncertainty calibration.}
    \label{tbl:corr}
    \small
    \begin{tabular}{lcccc}
        \toprule
        \textbf{Dataset} & \textbf{EMLP} & \textbf{BNN} & \textbf{Bayesian MAML} & \textbf{MetaKoopman} \\
        \midrule
        Truck Dataset         & 0.58 & 0.48 & 0.56 & \textbf{0.68} \\
        Ant-DisabledJoints                    & 0.59 & 0.53 & 0.58 & \textbf{0.72} \\
        HalfCheetah-Slope    & 0.63 & 0.52 & 0.62 & \textbf{0.69} \\
        \bottomrule
    \end{tabular}
\end{table}

\section{Ablation Studies}

We perform a series of ablation studies to assess the contribution of key design choices in \textsc{MetaKoopman}. In particular, we investigate the role of the tempering mechanism in adaptation and evaluate the effect of varying history lengths on predictive performance. These experiments provide insight into the trade-offs between accuracy and computational efficiency, and help identify configurations that most effectively balance the two.

\subsection{Effect of Tempering}
We evaluate the impact of Bayesian tempering by comparing \textsc{MetaKoopman} with and without this mechanism across three representative environments. As shown in Table~\ref{tab:tempering_ablation}, incorporating tempering consistently improves predictive accuracy, particularly in settings with pronounced distribution shifts or perturbations such as the truck dynamics and Ant environments. 

\begin{table}[H]
\centering
\caption{Performance with and without tempering (Avg. MSE).}
\label{tab:tempering_ablation}
\begin{tabular}{lcc}
\toprule
\textbf{Environment}         & \textbf{Without tempering} & \textbf{With tempering} \\
\midrule
Truck Dynamics               & 0.084                     & \textbf{0.059}\\
Walker-Friction                       & 0.56                       & \textbf{0.52} \\
Ant-DisabledJoints                          & 0.235                       & \textbf{0.21} \\
\bottomrule
\end{tabular}
\end{table}

\newpage
\subsection{Effect of History Length}

We investigate the effect of trajectory history length on predictive performance across three environments: Hopper-Gravity, HalfCheetah-Slope, and Truck Dynamics. The history length determines the temporal context available to the trajectory encoder, which can influence the model’s ability to infer latent dynamics. Table~\ref{tab:history_length_ablation} presents results for varying history lengths, revealing a consistent improvement in accuracy as more context is incorporated. Notably, longer histories yield the best performance, though with diminishing returns—highlighting some sort of trade-off between model expressiveness and computational efficiency.

\begin{table}[H]
\centering
\caption{Effect of History Length on Prediction Accuracy (Mean Squared Error).}
\label{tab:history_length_ablation}
\begin{tabular}{lcccc}
\toprule
\textbf{Environment}         & \textbf{1} & \textbf{4} & \textbf{8} & \textbf{16} \\
\midrule
Hopper-Gravity               & 0.048 & 0.046 & 0.04 & \textbf{0.036} \\
HalfCheetah-Slope            & 0.857 & 0.846 & 0.823 & \textbf{0.749} \\
Truck Dynamics               & 0.0889 & 0.084 & 0.075 & \textbf{0.06} \\
\bottomrule
\end{tabular}
\end{table}

\newpage

\section{Motion Planning with Cost-Guided Latent Action Sampling}

We integrate \textsc{MetaKoopman} into a real-time motion planner for generating dynamically feasible trajectories in a truck–trailer system under complex physical constraints. To enable efficient planning at scale, we adopt a Cost-Guided Latent Action Sampling (CLAS) approach that leverages the learned variational action encoder to directly sample from the latent control space.

\begin{wrapfigure}{r}{0.45\textwidth}
  \centering
  \includegraphics[width=0.43\textwidth]{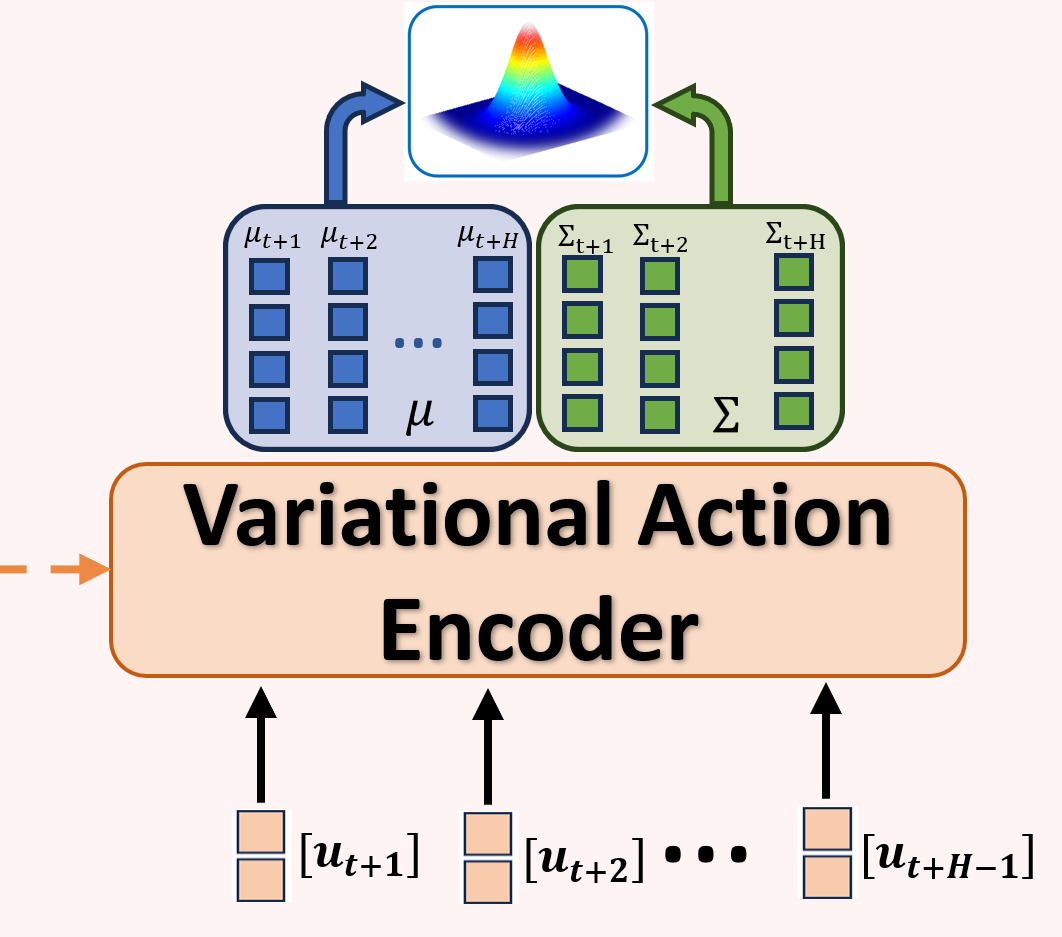}
  \caption{Variational action encoder used in the CLAS planner. The encoder learns a Gaussian distribution over action embeddings, enabling efficient sampling for trajectory rollout.}
  \label{fig:vae_architecture}
\end{wrapfigure}

Instead of sampling actions from the original control space and encoding them at test time, CLAS samples directly from the latent Gaussian space learned by the variational encoder (Figure~\ref{fig:vae_architecture}). Each sampled latent action is passed through the Koopman-based dynamics model via a small number of matrix multiplications to produce a candidate sub-trajectory over a fixed planning horizon.

This approach yields significant gains in computational efficiency. Each trajectory rollout requires only a single forward pass through the context encoder, a Bayesian update of the Koopman prior, and a sequence of matrix multiplications that scale linearly with the number of sampled trajectories—eliminating the need for costly action encoding operations. Consequently, \textit{CLAS} can generate and evaluate hundreds of thousands of dynamically consistent trajectories in parallel.

After obtaining the predicted trajectories, each candidate trajectory is scored according to a cost function that incorporates goal proximity, smoothness, and obstacle avoidance. The trajectory with the lowest overall cost is selected for execution. This planning formulation ensures that trajectories are not only feasible under the vehicle's dynamics, but also adapted to real-world factors such as low-traction surfaces, trailer articulation limits, and tight spatial constraints. The full planning procedure is detailed in Algorithm~\ref{alg:clas_planning}.

\begin{algorithm}[h]
\caption{Motion Planning with Cost-Guided Latent Action Sampling (CLAS)}
\label{alg:clas_planning}
\begin{algorithmic}[1]
\Require Initial state $s_0$, goal state $s_{\text{goal}}$, planning horizon $H$, number of trajectory samples $N$
\Ensure Dynamically feasible trajectory to goal
\State Initialize empty set of trajectories $\mathcal{T} \gets \emptyset$
\For{$i = 1$ to $N$} \Comment{This loop is fully parallelizable across samples}
    \State Sample a sequence of latent actions $\tilde{\mathbf{u}}_{1:H}^{(i)} \sim \mathcal{N}(0, I)$
    \State Integrate dynamics: $s_{1:H}^{(i)} \gets \texttt{KoopmanRollout}(s_0, \tilde{\mathbf{u}}_{1:H}^{(i)})$
    \State Compute cost $J_i \gets \texttt{EvaluateCost}(s_{1:H}^{(i)}, s_{\text{goal}})$
    \State Store $(\tilde{\mathbf{u}}_{1:H}^{(i)}, s_{1:H}^{(i)}, J_i)$ in $\mathcal{T}$
\EndFor
\State Select $\tilde{\mathbf{u}}^*_{1:H} \gets \argmin_{\tilde{\mathbf{u}}_{1:H}^{(i)}} J_i$ from $\mathcal{T}$
\State \Return Trajectory corresponding to $\tilde{\mathbf{u}}^*$
\end{algorithmic}
\end{algorithm}

\noindent
The sampling and rollout steps (lines 3--6) are parallelizable across $N$ trajectories, allowing the method to leverage batched computation on GPUs or distributed hardware. This property enables \textsc{CLAS} to evaluate hundreds of thousands of dynamically consistent trajectories in parallel.

\newpage

\subsection{Runtime Analysis}

To highlight the efficiency improvements of latent action sampling, We evaluate the runtime efficiency of our method and baselines on the truck dataset, selected for its moderate complexity and relevance to online control. All experiments are conducted on an NVIDIA A10 GPU to ensure fair comparisons across models. Reported runtimes correspond to the inference phase only and are averaged over 1,000 evaluations to reduce variance from transient hardware effects.

Our method, leverages the variational action encoder to sample directly from the latent Gaussian space, avoiding the need for deterministic encoding of control inputs at test time. This modification yields substantial computational savings while maintaining predictive performance—making it especially well-suited for real-time planning scenarios.

Table~\ref{tab:runtime_analysis} summarizes average per-step inference times for all compared models. Notably, \textsc{MetaKoopman W. variational action encoder} achieves the lowest runtime, outperforming both standard Koopman baselines and more complex probabilistic models such as Bayesian MAML, BNNs and ensembles by a wide margin.

\begin{table}[H]
\centering
\caption{Inference Runtime Analysis (Average Time per Step in milliseconds).}
\label{tab:runtime_analysis}
\small
\begin{tabular}{lc}
\toprule
\textbf{Method}        & \textbf{Runtime (ms)} \\
\midrule
\textsc{MetaKoopman}               & \textbf{0.087} \\
Deep Koopman Operator (DKO)   & 0.21 \\
Bayesian Neural Networks (BNNs) & 1.35 \\
MC Dropout                    & 0.98 \\
Neural ODEs (NODE)            & 0.22 \\
Ensemble MLP (EMLP)           & 0.49 \\
GrBAL                         & 0.37 \\
Bayesian MAML                & 0.64 \\
\bottomrule
\end{tabular}
\end{table}

\newpage

\section{Experimental Details}

\subsection{Simulation Environments}

We evaluate \textsc{MetaKoopman} across five simulated control environments implemented using MuJoCo \cite{todorov2012mujoco}, each designed to assess the model’s adaptability to both distributional shifts and structural perturbations. Distributional shifts include variations in gravity, terrain slope, and surface friction, while structural perturbations are introduced via randomized joint failures. To evaluate generalization beyond the training conditions, meta-training is performed over a constrained range of environmental parameters, and meta-testing spans a broader, previously unseen set of conditions, as detailed in Table~\ref{tab:sim-environments}.

Trajectory data is collected using a TD3 policy trained for one million steps on the full meta-test distribution of each environment. This policy is then used to generate trajectories for both meta-training and meta-testing phases. To promote diversity in the data, the agent follows its learned policy while incorporating stochastic action perturbations, where random actions are taken with a fixed probability. To simulate sensor noise, Gaussian observation noise with a standard deviation of $0.001$ is added in all environments.

\begin{table}[h]
\centering
\small
\begin{tabular}{lccc}
\toprule
\textbf{Environment} & \textbf{Meta-Train Range} & \textbf{Meta-Test / TD3 Range} & \textbf{Perturbation Type} \\
\midrule
HalfCheetah-Slope & $[-10^\circ, 10^\circ]$ & $[-15^\circ, 15^\circ]$ & Terrain slope \\
Walker2d-Friction & $[0.3, 3.0]$ scaling  & $[0.2, 5.0]$ scaling & Surface friction \\
Hopper-Gravity & $[-1.0, 1.0]$ offset & $[-1.5, 1.5]$ offset & Gravitational field \\
Ant-DisabledJoints & Partial joint set & Full joint set & Actuator failure \\
\bottomrule
\end{tabular}
\vspace{2mm}
\caption{Parameter ranges and shift types for simulated environments.}
\label{tab:sim-environments}
\end{table}

\paragraph{HalfCheetah-Slope.}
This environment simulates locomotion on inclined terrain by varying the ground plane angle at the start of each episode, affecting both balance and propulsion.

\paragraph{Walker2d-Friction.}
To model variability in ground contact, the friction coefficients of all ground-facing surfaces are scaled per episode.

\paragraph{Hopper-Gravity.}
The vertical gravitational force is perturbed by offsetting the standard $-9.81$ value, influencing jump timing and overall dynamics.

\paragraph{Ant-DisabledJoints.}
Mechanical degradation is simulated by disabling a randomly selected joint via nullification of its control signal. Failures are restricted to a subset of joints during meta-training, while meta-testing considers the full joint set.

\paragraph{HalfCheetah-Stationary.}
This environment retains the standard \texttt{HalfCheetah-v5} configuration without any induced perturbations or distribution shifts. It serves as a stationary baseline to evaluate the robustness of meta-learned models in settings where the underlying dynamics remain consistent. This allows us to assess whether adaptation mechanisms degrade performance when no shift is present.

\subsection{Implementation Details.}
This section provides implementation specifics for the proposed method, \textsc{MetaKoopman}, as well as concise summaries of the baseline methods used for empirical comparison. We describe architectural choices, adaptation strategies, and training hyperparameters. Common settings shared across experiments are listed at the end to avoid redundancy.

\paragraph{Proposed Method: \textsc{MetaKoopman}.}
\textsc{MetaKoopman} integrates Koopman operator theory with Transformer-based sequence models to capture latent linear dynamics under distributional shift. The architecture employs a Transformer encoder-decoder structure, with the context encoder implemented as a standard Transformer encoder and the action encoder as a Transformer decoder. This configuration allows future action embeddings to attend to the historical context effectively.

Input sequences are constructed by concatenating state and action vectors over 16 past time steps, resulting in input tokens of dimension $n_x + n_a$, where $n_x$ and $n_a$ denote the state and action dimensions, respectively. These tokens are projected into a hidden dimension of 48 via linear layers. To encode temporal structure, we use Rotary Positional Embedding (RoPE) \citep{rope}. The encoder utilizes full self-attention, while the decoder employs a sliding-window causal attention mechanism with a window size of 8, enforcing autoregressive structure and limiting attention to recent context.

The decoder processes future action sequences, similarly projected into the hidden dimension. A learnable start token is prepended to the decoder input to initiate sequence generation. This design enables structured conditioning on both past trajectories and future controls, supporting closed-form Bayesian updates in latent space.

\paragraph{Deep Koopman with Control (DKO).}
The DKO baseline models nonlinear dynamics by learning a latent linear system via separate encoders for states and control inputs. The state encoder maps raw states to a high-dimensional embedding, which is concatenated with the original state before further processing. It consists of fully connected layers with ReLU activations: \texttt{[State Dim} $\to$ 32 $\to$ 64 $\to$ 128 $\to$ 84 $\to$ \texttt{State Embedding Dim}] (ReLU applied to all but the output layer). The control encoder processes joint state-action inputs through a parallel MLP: \texttt{[State+Action Dim} $\to$ 32 $\to$ 64 $\to$ 128 $\to$ 84 $\to$ \texttt{Control Embedding Dim}]. The learned dynamics are governed by a linear Koopman operator, with system matrix $\mathbf{A}$ initialized from a Gaussian distribution and orthogonalized using singular value decomposition (SVD), and a control matrix $\mathbf{B}$ that maps control embeddings into the latent space.

\paragraph{Neural Ordinary Differential Equation (Neural ODE).}
The Neural ODE baseline models continuous-time dynamics by learning the time derivative of the system state as a function of the current state and action. The ODE function $f_{\theta}(x, a)$ is parameterized by a fully connected neural network, which takes as input the concatenated state-action vector of dimension $n_x + n_a$. After that, the hidden layers of the network have the widths of: [64, 96, 128, 96, 84, 32], all using ReLU activations. The output layer produces a vector of dimension $n_x$, corresponding to the predicted time derivative of the state. Time integration is performed using a fourth-order Runge-Kutta solver from the \texttt{torchdiffeq} library.

\paragraph{Ensemble Neural Networks.}
This baseline models system dynamics using an ensemble of ten feedforward networks. Each network takes a concatenated state-action input of dimension $n_x + n_a$ and passes it through a multilayer perceptron with hidden sizes [32, 64, 96, 64, 32], using ReLU activations throughout. The output is a state prediction of dimension $n_x$. Uncertainty is estimated via the variance of the ensemble predictions, capturing epistemic uncertainty arising from model disagreement.

\paragraph{Bayesian Neural Network (BNN).}
The BNN baseline models predictive uncertainty by placing Gaussian distributions over the weights and biases of the network. The dynamics model is implemented using Bayesian linear layers, where each layer maintains a variational posterior over its parameters. The input, a concatenated state-action vector of dimension $n_x + n_a$, is passed through a multilayer architecture with hidden sizes [32, 64, 128, 84, 32], using ReLU activations, and outputs the predicted next state of dimension $n_x$. All priors are zero-mean unit-variance Gaussians.

The training objective combines mean squared error with a Kullback–Leibler divergence term to regularize the variational posterior:
\[
\mathcal{L} = \mathcal{L}_{\text{data}} + \beta \, \mathrm{KL}(\text{posterior} \parallel \text{prior}),
\]
with $\beta = 10^{-5}$. During inference, ten forward passes using Monte Carlo sampling produce mean predictions and diagonal covariance estimates, enabling uncertainty-aware forecasting.

\paragraph{Bayesian Model-Agnostic Meta-Learning (BMAML)}
The Bayesian MAML (BMAML) model extends gradient-based meta-learning by maintaining a particle-based ensemble to approximate the task posterior. Each particle represents a distinct instantiation of a shared dynamics model, parameterized as a feedforward neural network with layers [$(n_x + n_a) \to 144 \to 96 \to n_x$] and ReLU activations, where $n_x$ and $n_a$ are the dimensions of state and action inputs, respectively. The ensemble is initialized from a learned meta-prior and adapted per task using Stein Variational Gradient Descent (SVGD), which preserves uncertainty structure by coordinating updates across particles using an RBF kernel. During inference, future state trajectories are predicted by averaging rollouts from all particles. The model outputs both the predictive mean and variance, capturing epistemic uncertainty in a manner suited for few-shot adaptation.

\paragraph{Gradient-Based Adaptive Learner (GrBAL)}
The Gradient-Based Adaptive Learner (GrBAL) enables fast online adaptation of a neural dynamics model by leveraging meta-learned initial parameters optimized for fast adaptation. The model is a feedforward network with layers [$(n_x + n_a) \to 56 \to 108 \to 56 \to n_x$], where $n_x$ and $n_a$ are the dimensions of the state and action respectively, and ReLU activations are used throughout. GrBAL performs inner-loop adaptation on each task by computing gradients of the prediction loss over recent trajectory segments and updating the model weights accordingly. This is implemented using differentiable inner-loop optimization with higher-order gradients, allowing backpropagation through the adaptation process. During inference, the adapted model predicts future state trajectories from the last observed state and planned action sequence. GrBAL is designed to support rapid and effective adaptation in dynamic environments, particularly when system dynamics change abruptly due to disturbances or unmodeled factors.

\paragraph{Common Implementation Details.}
To ensure comparability, all models are configured to have approximately 60k trainable parameters, with the exception of Bayesian MAML and Ensemble Neural Networks, which consist of multiple networks and therefore contain significantly more parameters. All models are trained using the AdamW optimizer with a learning rate of $3 \times 10^{-3}$ and weight decay of $1 \times 10^{-2}$. A \texttt{StepLR} scheduler decays the learning rate by a factor of 0.3 at one-third intervals over the course of 300 training epochs. All models are trained with a batch size of 128 and a fixed prediction horizon of 32 future time steps. Meta-learning models are provided with an additional context window of 16 time steps for adaptation. The primary training objective is the Mean Squared Error (MSE) between predicted and ground truth future states, except for \textsc{MetaKoopman}, which minimizes the negative log-likelihood (NLL). Gradient clipping with a maximum norm of 1.0 is applied to all feedforward networks to improve training stability. A summary of baseline capabilities with respect to adaptation, uncertainty estimation, and structured dynamics modeling is provided in Table~\ref{tab:baseline-capabilities}.

\begin{table}[h]
\centering
\small
\begin{tabular}{lccc}
\toprule
\textbf{Method} & \textbf{Adaptation} & \textbf{Uncertainty} & \textbf{Structured Dynamics} \\
\midrule
\textsc{MetaKoopman}         & \checkmark & \checkmark & \checkmark \\
Bayesian MAML                & \checkmark & \checkmark &  \\
GrBAL                        & \checkmark &      &  \\
Deep Koopman (DKO)           &      &      & \checkmark \\
Ensemble NN                  &      & \checkmark &  \\
Bayesian NN                 &      & \checkmark &  \\
Neural ODE                   &      &      & \checkmark \\
\bottomrule
\end{tabular}
\vspace{2mm}
\caption{Comparison of baseline capabilities in terms of adaptation, uncertainty quantification, and incorporation of structured dynamics.}
\label{tab:baseline-capabilities}
\end{table}

\end{document}